\newcommand{\sayan}[1]{\textcolor{blue}{#1}}
\newcommand{\xiangru}[1]{{\textcolor{orange}{#1}}}
\DeclareMathOperator*{\argsort}{arg\,sort}
\newcommand{\reals}{\mathbb{R}}
\newcommand{\add}{\mathrm{Add}}
\newcommand{\mul}{\mathrm{Mul}}
\newcommand{\mat}{\mathrm{Mmul}}
\newcommand{\inv}{\mathrm{Inv}}
\newcommand{\Prod}{\mathrm{Prod}}
\newcommand{\Exp}{\mathrm{Exp}}
\newcommand{\Sum}{\mathrm{Sum}}
\newcommand{\ind}{\mathrm{Ind}}
\newcommand{\sort}{\mathrm{Sort}}
\newcommand{\norm}{\mathrm{Norm}}
\newcommand{\pow}{\mathrm{Pow}}
\newcommand{\divi}{\mathrm{Div}}
\newcommand{\crown}{\text{CROWN}}
\newcommand{\ngcustom}{\mathrm{N}}
\newcommand{\gsplat}{\textsc{GaussianSplat}}
\newcommand{\absrend}{\textsc{AbstractSplat}}
\newcommand{\pcsort}{\textsc{BlendSort}}
\newcommand{\pcind}{\textsc{BlendInd}}
\newcommand{\matinv}{\textsc{MatrixInv}}
\newcommand{\pcsortmath}{\textsc{BlendSort2}}
\newcommand{\pcindmath}{\textsc{BlendInd2}}
\newcommand{\xpg}{\mathsf{XPG}}
\newcommand{\mpg}{\mathsf{MPG}}
\begin{document}
\title{Abstract Rendering: Computing All that is Seen in Gaussian Splat Scenes}
%
%\titlerunning{Abbreviated paper title}
% If the paper title is too long for the running head, you can set
% an abbreviated paper title here
%
\author{Yangge Li, Chenxi Ji, Xiangru Zhong, Huan Zhang, Sayan Mitra}
% %
% \authorrunning{F. Author et al.}
% % First names are abbreviated in the running head.
% % If there are more than two authors, 'et al.' is used.
% %
\institute{University of Illinois at Urbana-Champaign 
% Springer Heidelberg, Tiergartenstr. 17, 69121 Heidelberg, Germany
% \email{lncs@springer.com}\\
% \url{http://www.springer.com/gp/computer-science/lncs} \and
Urbana, IL 61801, USA\\
% \email{\{abc,lncs\}@uni-heidelberg.de
}
\maketitle              % typeset the header of the contribution
\begin{abstract}
We introduce abstract rendering, a method for computing a set of images by rendering a scene from a continuously varying range of camera positions. The resulting abstract image—which encodes an infinite collection of possible renderings—is represented using constraints on the image matrix, enabling rigorous uncertainty propagation through the rendering process. This capability is particularly valuable for the formal verification of vision-based autonomous systems and other safety-critical applications.
Our approach operates on Gaussian splat scenes, an emerging representation in computer vision and robotics. We leverage efficient piecewise linear bound propagation to abstract fundamental rendering operations, while addressing key challenges that arise in matrix inversion and depth sorting—two operations not directly amenable to standard approximations. To handle these, we develop novel linear relational abstractions that maintain precision while ensuring computational efficiency. These abstractions not only power our abstract rendering algorithm but also provide broadly applicable tools for other rendering problems.
Our implementation, $\absrend$, is optimized for scalability, handling up to 750k Gaussians while allowing users to balance memory and runtime through tile and batch-based computation. Compared to the only existing abstract image method for mesh-based scenes, $\absrend$ achieves $2-14\times$ speedups while preserving precision. Our results demonstrate that continuous camera motion, rotations, and scene variations can be rigorously analyzed at scale, making abstract rendering a powerful tool for uncertainty-aware vision applications.
\keywords{Abstract Rendering  \and Gaussian Splat \and Abstract Image}
\end{abstract}
\section{Abstract Rendering for Verification and Challenges}
\label{sec:intro}

Rendering computes an image from the description of a scene and a camera. In this paper, we introduce {\em abstract rendering}, which computes the set of {\em all} images that can be rendered from a set of scenes and a continuously varying range of camera poses. The resulting infinite collection of images, called an {\em abstract image}, is represented by constraints on the image matrix, such as using interval matrices. An example abstract image generated from the continuous movement of a camera is shown in Figure~\ref{fig:intro}.
Abstract rendering provides a powerful tool for the formal verification of autonomous systems, virtual reality environments, robotics, and other safety-critical applications that rely on computer vision. For instance, when verifying a vision-based autonomous system, uncertainties in the agent's state naturally translate into a set of camera positions. These uncertainties must be systematically propagated through the rendering process to analyze their impact on perception, control, and overall system safety. 
By enabling rigorous uncertainty propagation, abstract rendering supports  verification of statements such as ``No object is visible to the camera as it pans through a specified  angle.''

\begin{figure}[t]
    \centering
    \includegraphics[width=0.24\linewidth]{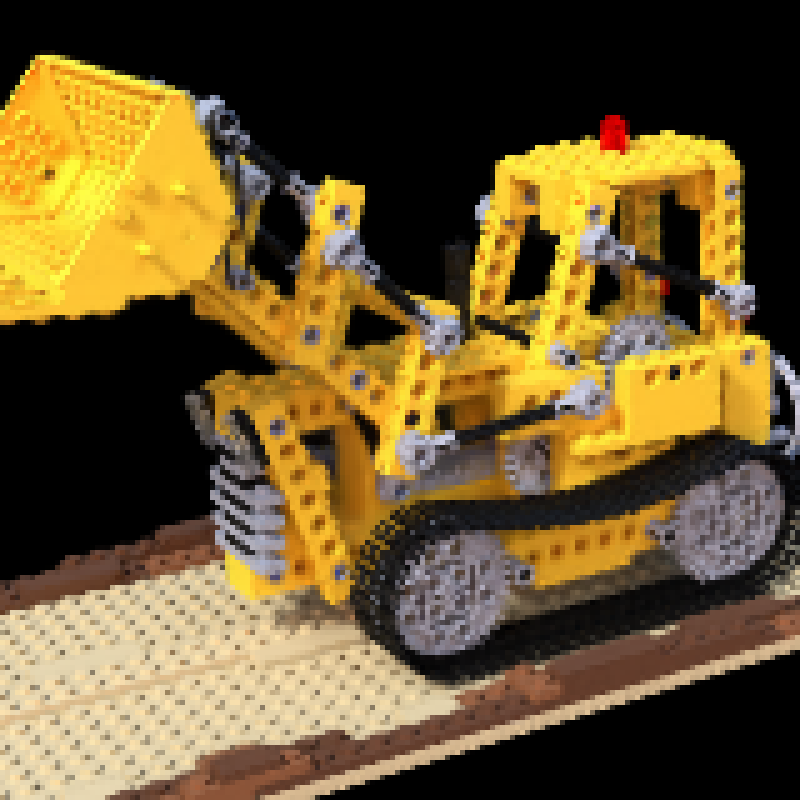}
    \includegraphics[width=0.24\linewidth]{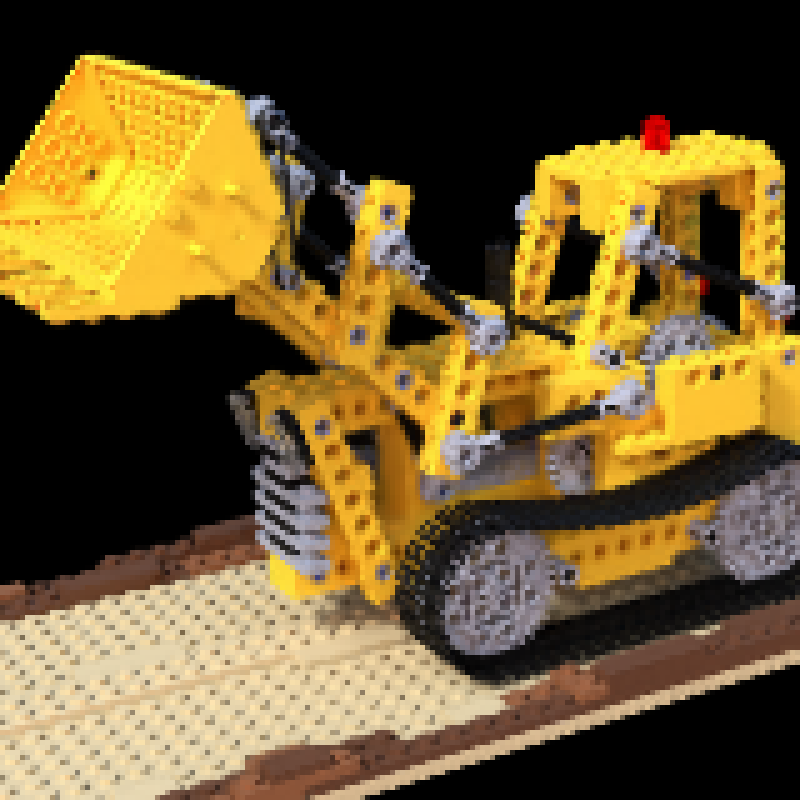}
    \includegraphics[width=0.24\linewidth]{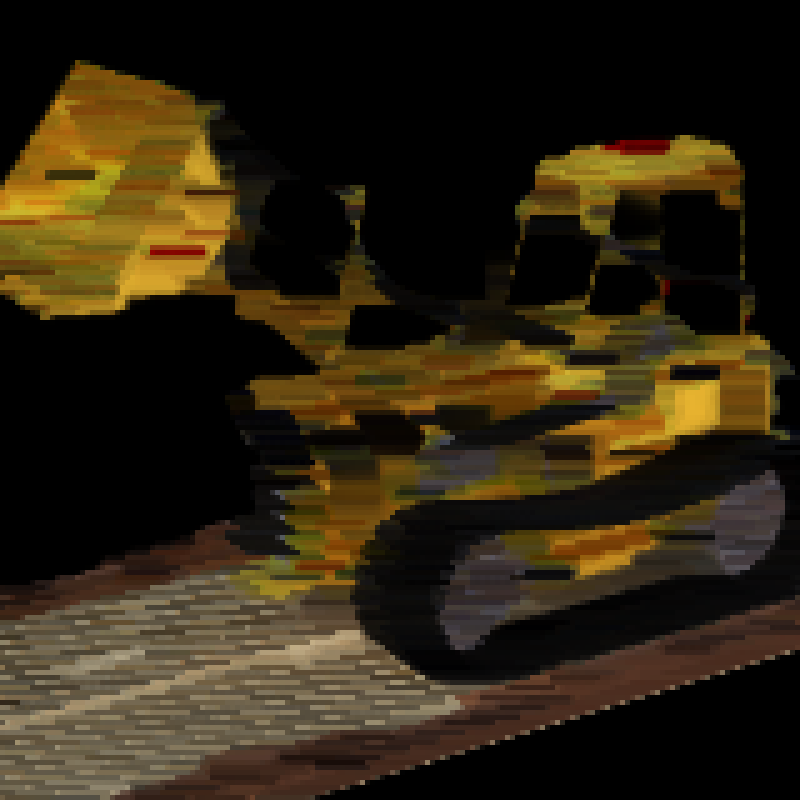}
    \includegraphics[width=0.24\linewidth]{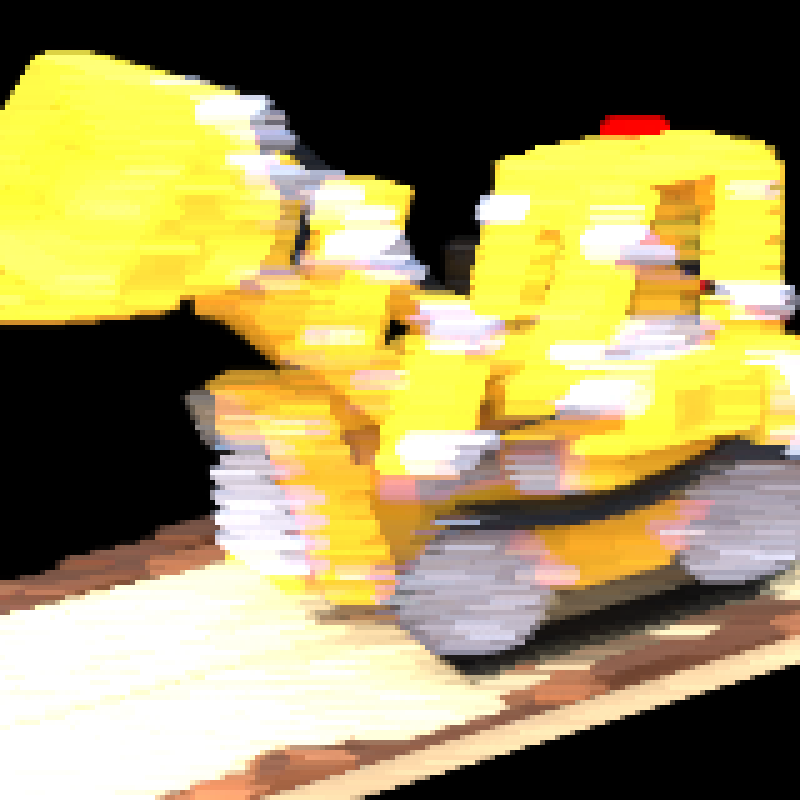}
    \caption{\small 
    Renderings of a Gaussian splat scene with a Lego bulldozer (from~\cite{mildenhall2020nerf}). Rendered image before ({\em Left}) and  after  camera shifts horizontally by 10cm ({\em Center left\/}). The lower ({\em Center right})  and upper ({\em Right}) bounds on  the abstract image computed by our abstract rendering method for the given camera movement.  For every pixel, the color in any image generated by the camera movement is guaranteed to lie between the corresponding pixel values in these lower and upper bound images.}
    \label{fig:intro}
\end{figure}
% \vspace{-0.1cm}

Our abstract rendering algorithm operates on scenes represented by Gaussian splats—an efficient and differentiable representation for 3D environments~\cite{10.1145/3592433}. A scene consists of a collection of 3D Gaussians, each with a distinct color and opacity. The rendering algorithm projects these splats onto the 2D image plane of a camera and blends them to determine the final color of each pixel. 
Gaussian splats are gaining prominence in computer vision and robotics due to their ability to be effectively learned from camera images and their capacity to render high-quality images in near real time~\cite{zhu20243dgaussiansplattingrobotics}. They have transformed virtual reality applications by enabling the rapid creation of detailed 3D scenes~\cite{jiang2024vr-gs,schieber2024semanticscontrolledgaussiansplattingoutdoor,anonymous2025gsvton}. 
In robotics and autonomous systems, Gaussian splatting has proven valuable for Simultaneous Localization and Mapping (SLAM) and scene reconstruction~\cite{tosi2024nerfs3dgaussiansplatting}.

To rigorously propagate uncertainty in such settings, we represent all inputs—such as camera parameters and scene descriptions—as linear sets (i.e., vectors or matrices defined by linear constraints). This abstraction allows us to systematically propagate uncertainty through each rendering operation.

% To explain how abstract rendering works and highlight our technical contributions, we fix the representation for both the inputs and computed quantities to be linear sets—that is, sets of vectors or matrices represented by linear constraints. For example, a single camera is described by its focal length $f\in\reals^2$, its position $t\in\reals^3$, and its rotation $R\in\reals^{3\times3}$ with respect to the world coordinates. A set of camera poses can be represented as interval vectors and interval matrices. Similarly, a set of scenes can be represented by linear sets of the means and covariances of the Gaussians. To develop an abstract rendering algorithm for Gaussian splats, we must accurately propagate these linear sets through each of the operations in the four rendering steps.

The rendering algorithm for Gaussian splats computes the color of each pixel $\mathsf{u}$ in four steps. First, each 3D Gaussian is transformed from world coordinates to camera coordinates, and then to 2D pixel coordinates. The third step computes the {\em effective opacity} of each Gaussian at the pixel $\mathsf{u}$ by evaluating its probability density. Finally, the colors of all Gaussians are blended, weighted by their effective opacities and sorted by their distance to the image plane. 
% Sorting captures the occlusions and the transparencies of the objects appearing in the image. 

The first two steps in the rendering process involve continuous transformations that can be precisely bounded above and below by piecewise linear functions. Although it is mathematically straightforward to propagate linear sets through these operations, achieving accurate bounds  for high-dimensional objects (e.g., scenes with over 100k Gaussians and images with over 10k pixels) requires a GPU-optimized linear bound propagation method such as the ones implemented in $\crown$~\cite{10.5555/3495724.3495820,zhang2018efficient}.

The third step evaluates Gaussian distributions over a linear set, which requires computing an interval matrix containing the inverses of multiple covariance matrices. Standard inversion methods (e.g., the adjugate approach) become numerically unstable near singularities and yield overly conservative bounds. We mitigate this by using a Taylor expansion around a stable reference—such as the midpoint of the input set—and by carefully tuning the expansion order and bounding the truncation error, we achieve inverse bounds that are sufficiently tight for abstract rendering.

The final step blends Gaussian colors based on depth. In abstract rendering, uncertain depths allow multiple possible sorting orders, and merging these orders leads to overly conservative bounds. For example, if several Gaussians with different colors can be frontmost, the resulting bound may unrealistically turn white. To address this, we replace depth-sorting with an index-free blending formulation that uses an indicator function to directly model pairwise occlusion, ensuring each Gaussian’s contribution is bounded independently.

Our abstract rendering algorithm, $\absrend$, integrates these techniques and supports tunable time–memory tradeoffs for diverse hardware. Our experiments show that $\absrend$ scales from single-object scenes to those with over 750k Gaussians—handling both camera translations and rotations—and achieves 2–14$\times$ speedups over the only existing abstract image method for mesh-based scenes~\cite{10745846} while preserving precision. Tile and batch optimizations enable users to balance GPU memory usage (which scales quadratically with tile size) against runtime (which scales linearly with batch size). Moreover, our experiments confirm that regularizing near-singular Gaussians is critical for reducing over-approximation. Together, these contributions significantly advance rigorous uncertainty quantification in rendering.

% Our abstract rendering algorithm, $\absrend$, brings together all of the above techniques and is implemented with an eye towards enabling different time–memory tradeoffs on available computing resources. Our experiments demonstrate that $\absrend$ scales from single-object scenes to those containing over 750k Gaussians, and handles both camera translations and rotations. Compared to the only other existing work for computing abstract images~\cite{10745846}, which only works for  mesh-based scenes, our method achieves 2–14$\times$ speedups while preserving precision. Tile and batch-based optimizations balance GPU memory usage (which scales quadratically with tile size) and runtime (which scales linearly with batch size). Our experiments further validate the theoretical insight that near-singular Gaussians tend to produce overly conservative abstract images, underscoring the need for regularized scenes to mitigate precision loss. Together, these contributions advance rigorous and effective uncertainty quantification in rendering.

In summary, the key contributions from our paper are: (1) We present the first abstract rendering algorithm for Gaussian splats. It can rigorously bound the output of 3D Gaussian splatting under scene and camera pose uncertainty in both position and orientation. This enables formal analysis of vision-based systems in dynamic environments.
(2) We have developed a mathematical and a software framework for abstracting the key operations needed in the rendering algorithms. While many of these approximation can be directly handled by a library like $\crown$~\cite{10.5555/3495724.3495820}, we developed new methods for handling challenging operations like matrix inversion and sorting. We believe this framework can be useful beyond Gaussian splats to abstract other rendering algorithms. 
(3) We present a carefully engineered implementation that leverages tiled rendering and batched computation to achieve excellent scalability of abstract rendering. This design enables a tunable performance–memory trade-off, allowing our implementation to adapt effectively to diverse hardware configurations.

\paragraph{Relationship to Existing Research}
The only closely related work computes abstract images for triangular mesh scenes~\cite{10745846}.  
To handle camera-pose uncertainty, that method collects each pixel’s RGB values into intervals over all possible viewpoints. In practice, this amounts to computing partial interval depth-images for each triangle and merging them using a depth-based union, resulting in an interval image that over-approximates all concrete images for that region of poses. This work considers only camera translations, not rotations. Our method works with Gaussian splats which involve more complex operations and are learnable. Our algorithm  also has certain performance and accuracy advantages as discussed in Section~\ref{sec:exp:comp}.

Differentiable rendering~\cite{cgf.14507} algorithms are based on ray-casting~\cite{Lombardi:2019,mildenhall2020nerf,9156386} or  rasterization~\cite{10745846,10.1145/3415263.3419160,10.1145/3592433}. Although these techniques are amenable to gradient-based sensitivity analysis, they are not concerned with rigorous uncertainty propagation through the rendering process. We addresses this gap for Gaussian splatting. Our mathematical framework could be extended to other rasterization pipelines and even to ray-casting methods where similar numerical challenges arise.

Abstract interpretation~\cite{bouissou:inria-00528590,10.1007/978-3-540-75101-4_45} is effective for verifying large-scale software systems, however, applying it directly to rendering  presents unique challenges. Rendering involves high-dimensional geometric transformations (e.g., perspective projection, matrix inversion) and discontinuous operations (e.g., sorting) that exceed the typical scope of tools designed for control-flow analysis~\cite{10.1007/978-3-540-31987-0_3,10.1145/2737924.2738000} or neural network verification~\cite{10.1145/3290354,10.5555/3495724.3495820}. Our method adapts abstract interpretation principles specifically to the rendering domain, providing a tailored approach for bounding rendering-specific operations while balancing precision and scalability.

Finally, although neural network robustness verification~\cite{10.1145/3290354,tran2020cav,10.5555/3495724.3495820} analyzes how fixed image perturbations affect network outputs, our work operates upstream by generating the complete set of images a system may encounter under scene and camera uncertainty. This distinction is critical for safety-critical applications, where environmental variations—such as lighting and object motion—must be rigorously modeled before perception occurs.

\section{Background: Linear Approximations and Rendering}
\label{sec:background}
Our abstract rendering algorithm in Section~\ref{sec:abs-render} computes linear bounds on pixel colors derived from bounds on the scene and the camera parameters. In this section, we give an overview of the standard rendering algorithm for 3D Gaussian splat scenes
and introduce linear over-approximations of functions. 

%We method in the form of linear bounds by modeling the image rendering process as a complex function composed of basic operations and propagating the linear bounds step-by-step through each operation.

\subsection{Linear Over-approximations of  Basic Operations}
\label{sec:lin_bg}
For a vector $x \in \reals^n$,  we denote its $i^{\mathit{th}}$  element by $x_i$. Similarly, for a matrix $A \in \reals^{m \times n}$, $A_{ij}$ %\xiangru{Should be $\reals^{m\times n}$} 
is the element in the $i^{\mathit{th}}$ row and $j^{\mathit{th}}$ column.
For two vectors (or matrices) $x,y$ of the same shape, $x<y$ denotes element-wise comparison. For both a vector and matrix $x$, we denote $\Vert x\Vert$ as its Frobenius norm.
We use boldface to distinguish set-valued variables from their fixed-valued counterparts, (e.g., $\mathbf{uc}$ v.s. $\mathsf{uc}$).
A {\em linear set\/} or a polytope is a set $\{x \in \mathbb{R}^n \mid Ax \leq b\}$, where $Ax \leq b$ are the  constraints defining the set. An {\em interval  \/} is a special linear set.
A {\em piecewise linear relation \/} $R$ on $\reals^n \times \reals^m$ is a relation of the form $\{\langle x, y \rangle \in \mathbb{R}^n \times \mathbb{R}^m \mid \underline{A}x + \underline{b} \leq y \leq \overline{A}x + \overline{b}, Ax \leq b\}$, where $\langle \underline{A}, \underline{b}, \overline{A}, \overline{b} \rangle$ defines linear constraints on $y$ with respect to $x$. 
%\xiangru{Is $x$ defined on $\reals^m$ or $\reals^n$? If $A \in \reals^{n \times m}$, then $x$ should be on $\reals^m$, and $\langle x, y \rangle$ should be on $\reals^m\times \reals^n$.}
%, and $Ax \leq b$ forms the linear set of $x$. 
A {\em constant relation \/} is a special  piecewise linear relation where  $\underline{A}$ and $\overline{A}$ are zero matrices.
We define a class of functions that can be approximated arbitrarily precisely with piecewise linear lower and upper bounds over any compact input set. 
\begin{definition}
\label{def:lin-oa}
A function  $f:X\to \reals^m$ is called {\em linearly over-approximable} if, for any compact $B \subseteq X$ and $\varepsilon >0$, there exist  piecewise linear maps $\ell_B,u_B:B\to \reals^m$ such that for all $x\in B$, $\ell_B(x) \leq f(x) \leq u_B(x)$ and 
$|u_B(x) -  \ell_B(x)|< \varepsilon$. 
%\xiangru{No need to take the absolute value}
%\xiangru{Probably better not use $R$ since it has been used to denote linear relation above.}
\end{definition}

For any linearly over-approximable function $y = f(x)$ and any linear set of inputs $B\subseteq X$, $R=\{(x,y)\mid\ell_B(x)\leq y\leq u_B(x),  x\in B\}$ is a piecewise linear relation that over-approximates function $f$ over $B$.

%Furthermore, it is clear that continuous functions fall into this class (a proof is provided in Appendix~\ref{app:proof}).

%\sayan{We may need another statement with a proof that linarly approximable functions are closed under composition.}

\begin{proposition}
\label{prop:cont-lin-approx}
Any continuous function is linearly over-approximable.
\end{proposition}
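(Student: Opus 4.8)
The plan is to prove this by a direct construction using the Stone--Weierstrass theorem (or more elementary piecewise-linear interpolation) combined with a uniform continuity argument on the compact set $B$. Fix a continuous $f : X \to \reals^m$, a compact $B \subseteq X$, and $\varepsilon > 0$. Since it suffices to handle each of the $m$ output coordinates separately (we can stack the coordinatewise bounds into vector-valued piecewise linear maps), I would reduce to the scalar-valued case $f : B \to \reals$. Because $f$ is continuous on the compact set $B$, it is uniformly continuous there, so there is a $\delta > 0$ such that $\|x - x'\| < \delta$ implies $|f(x) - f(x')| < \varepsilon/2$.

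Next I would build a single piecewise linear function $g : B \to \reals$ that approximates $f$ to within $\varepsilon/2$ uniformly on $B$. The cleanest route is to take a finite triangulation (simplicial mesh) of a box containing $B$ with mesh size smaller than $\delta$, let $g$ be the function that agrees with $f$ at the vertices of the mesh and is affine on each simplex; uniform continuity guarantees $\|f - g\|_\infty \le \varepsilon/2$ on $B$ by the standard estimate that on each simplex both $f$ and the interpolant stay within $\varepsilon/2$ of the common vertex values. Alternatively one can invoke Stone--Weierstrass to get a polynomial within $\varepsilon/4$ and then piecewise-linearly interpolate the polynomial within $\varepsilon/4$; either way we obtain a genuinely piecewise linear $g$. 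Then set $\ell_B := g - \varepsilon/2$ and $u_B := g + \varepsilon/2$. These are piecewise linear (adding a constant preserves piecewise linearity), they satisfy $\ell_B(x) \le f(x) \le u_B(x)$ for all $x \in B$ since $|f(x) - g(x)| \le \varepsilon/2$, and $u_B(x) - \ell_B(x) = \varepsilon < \varepsilon$ as required. Strictness of the final inequality can be arranged by running the argument with $\varepsilon/2$ in place of $\varepsilon$ from the start if one is being careful.

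The main obstacle — really the only subtle point — is making precise the notion of ``piecewise linear'' over a set $B$ that need not itself be a polytope or convex: one must ensure the pieces (simplices) cover $B$ and that on the overlap the definition is consistent, and that the finitely-many-pieces requirement implicit in Definition~\ref{def:lin-oa} is met. Covering $B$ by a bounded box and triangulating that box with finitely many simplices of small diameter handles this cleanly, and restricting $g$ to $B$ does no harm. A second, minor bookkeeping point is the passage from scalar to vector output: one applies the scalar construction to each coordinate $f_i$ with tolerance $\varepsilon$ and then the vector maps $\ell_B = (\ell_B^{(1)}, \dots, \ell_B^{(m)})$, $u_B = (u_B^{(1)}, \dots, u_B^{(m)})$ satisfy the elementwise inequalities and $|u_B(x) - \ell_B(x)| < \varepsilon$ componentwise, which is exactly the meaning of the elementwise notation in the definition. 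I would state the theorem's proof in essentially this order: reduce to scalar, invoke uniform continuity, construct the interpolant on a fine triangulation, pad by $\varepsilon/2$, and verify the three defining properties.
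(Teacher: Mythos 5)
Your proof is correct in substance but takes a genuinely different route from the paper's. The paper works locally: at each point $x_0$ it picks an arbitrary slope $k$, uses plain continuity to find a neighborhood $U(x_0)$ on which $f(x_0)+k(x-x_0)\pm\varepsilon/3$ sandwich $f$, extracts a finite subcover of the compact set $B$, and then takes the pointwise $\min$ of the local upper bounds and $\max$ of the local lower bounds to assemble the piecewise linear envelopes. You instead build a single global piecewise-linear interpolant $g$ on a fine triangulation and pad it by a constant, relying on uniform continuity (rather than a finite subcover) to exploit compactness. Your construction is the more classical one and yields a matched pair $\ell_B = u_B - \varepsilon$ with a uniform gap by design; the paper's min/max construction is closer in spirit to the local linear relaxations actually used by $\crown$ and does not require triangulating the ambient space. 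Two small points you should tighten: (i) the vertices of a triangulation of a box containing $B$ need not lie in $X$, so $f$ may be undefined there --- either extend $f$ continuously to the box (Tietze) before interpolating, or assign vertex values by nearest points of $B$; (ii) if $f$ and $g$ each stay within $\varepsilon/2$ of the common vertex value on a simplex, the triangle inequality gives $\|f-g\|_\infty \le \varepsilon$, not $\varepsilon/2$, so you should run uniform continuity at tolerance $\varepsilon/4$ --- you already flag that the constants need adjusting, so this is bookkeeping rather than a gap.
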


%\chenxi{you can tune the proof to be more concise, but I think this is not the place we should spend too much our effort on, as long as it is a correct proof.}

%\sayan{We need to say something to avoid the critique that ``you can use Prop 1 to find bounds then why use crown?'':}
Although Proposition~\ref{prop:cont-lin-approx} implies piecewise linear bounds on continuous functions can be adjusted sufficiently tight by shrinking the neighborhood, finding accurate bounds over larger neighborhoods remains a significant challenge. Our implementation of abstract rendering uses \crown~\cite{10.5555/3495724.3495820} for computing such bounds.
Except for $\ind$, $\inv$, and $\sort$, all the basic operations listed in Table~\ref{tab:op} are continuous and therefore linearly over-approximable. Moreover, since $\ind$ is a piecewise constant function, it can be tightly bounded on each subdomain by partitioning at $x = 0$, allowing it to be treated as a linearly over-approximable function. This is formalized in Corollary~\ref{cor:op-lin-oa}.

\begin{corollary}
    \label{cor:op-lin-oa}
All operations in Table~\ref{tab:op}, except for $\inv$ and $\sort$, are linearly over-approximable. %\xiangru{Linearly over-approximability is defined for univariate functions, what does it mean for an operator with inputs $x$ and $y$ being linearly over-approximable?}
\end{corollary}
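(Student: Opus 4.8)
The plan is to split the entries of Table~\ref{tab:op} into two groups and handle each with a tool that is already available. The first group consists of every operation that is continuous on the domain $X$ on which it is used in the rendering pipeline — that is, everything other than $\inv$, $\sort$, and $\ind$, such as $\add$, $\mul$, $\mat$, $\Prod$, $\Exp$, $\Sum$, $\trans$, $\norm$, $\pow$, and $\divi$. Each of these is assembled from polynomial maps, the exponential, and (for $\divi$ and $\pow$) reciprocals, and in the pipeline is only ever applied where the denominator or base stays in the region on which the reciprocal is continuous; hence each is a continuous map $X \to \reals^m$. Proposition~\ref{prop:cont-lin-approx} then immediately gives that every operation in this group is linearly over-approximable. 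The only work here is bookkeeping: checking entry by entry that the domain declared for each operation is one on which the operation is genuinely continuous, so that ``continuous on $X$'' holds rather than merely ``continuous on the interior''.

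The second group is the single operation $\ind$, which is piecewise constant with its one breakpoint at $0$; for it I would verify Definition~\ref{def:lin-oa} directly. Fix a compact $B \subseteq X$ and $\varepsilon > 0$, and cut $B$ at $x = 0$ so that $B$ is partitioned into at most two pieces on each of which $\ind$ is constant, say with value $c^- \in \{0,1\}$ on one side of $0$ and $c^+ \in \{0,1\}$ on the other. On each such piece the constant relation with $\ell = u$ equal to that constant is an exact — hence trivially $\varepsilon$-tight — piecewise linear lower and upper bound for $\ind$. Assembling the two pieces yields piecewise linear maps $\ell_B, u_B : B \to \reals^m$ with $\ell_B \le \ind \le u_B$ and $|u_B - \ell_B| \equiv 0 < \varepsilon$, which is exactly what Definition~\ref{def:lin-oa} demands.

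I expect this second step to be the only substantive point, and it is a mild one. It rests on two observations: that the bounds in Definition~\ref{def:lin-oa} are only required to be \emph{piecewise} linear, which is what permits cutting the domain at the jump of $\ind$; and that the cut at $x = 0$ must place the breakpoint on the side where $\ind(0)$ is attained, so that the two pieces really are regions on which $\ind$ is constant. Everything else is the routine verification of the first group. Combining the two groups then yields the corollary: every operation in Table~\ref{tab:op} except $\inv$ and $\sort$ is linearly over-approximable.
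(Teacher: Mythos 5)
Your proposal matches the paper's own justification essentially verbatim: the continuous operations are dispatched by Proposition~\ref{prop:cont-lin-approx}, and $\ind$ is handled by partitioning at $x=0$ into subdomains on which it is constant, exactly as in the paragraph preceding the corollary and in the bounds of Table~\ref{tab:lin-exm}. The only (inconsequential) slip is describing $\pow$ as involving reciprocals — it is matrix power with a natural-number exponent, hence polynomial and continuous without any domain restriction.
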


% \begin{proof}
% Since all operations in Table1, except $\ind$ and $\sort$, are continuous, they are linearly over-approximable by Proposition~\ref{prop:cont-lin-approx}.
% \qed
% \end{proof}

\begin{table}[t]
    \caption{\small Operations. 
     Conditional $A?B:C$  returns $B$ if $A$ is true and otherwise $C$. Permuting $x$ based on $y$ means reordering the elements of $x$ according to the indices that sort $y$ in ascending order. E.g., permuting $(9,3,7)$ based on $(5,13,8)$ results in $(9,7,3)$.} 
    \centering
    \begin{adjustbox}{max width=\textwidth}
    \begin{tabular}{|l|l|l|l|}
    \hline
    Operation  & Inputs & Output & Math Representation\\
    \hline
    Element-wise add ($\add$) & $x,y\in\reals^{n\times m}$ & $z\in\reals^{n\times m}$ & $z_{ij}=x_{ij}+y_{ij}$ \\
    \hline
    Element-wise multiply ($\mul$) & $x,y\in\reals^{n\times m}$ & $z\in\reals^{n\times m}$ & $z_{ij}=x_{ij}*y_{ij}$\\
    \hline
    Division ($\divi$) & $x\in\reals_{> 0}$ & $z\in\reals_{> 0}$ & $z= \frac{1}{x}$\\
    \hline 
    Matrix multiplication ($\mat$) & $x\in\reals^{n\times m},y\in\reals^{m\times k}$ & $z\in\reals^{n\times k}$ & $z = x \times y$\\
    \hline
    Matrix inverse ($\inv$) & $x\in\reals^{n\times n}$ & $z\in\reals^{n\times n}$ & $z = x^{-1}$\\
    \hline
    Matrix power ($\pow$) & $x\in\reals^{n\times n},k\in\mathbb{N}$ & $z\in\reals^{n \times n}$ & $z= x\times x\times \cdots \times x $\\
    \hline
    Summation ($\Sum$) & $x\in\reals^{n},k\in\mathbb{N}$ & $z\in\reals$ & $z=\sum_{i=1}^k x_{i}$ \\
    \hline
    Product ($\Prod$) & $x\in\reals^{n},k\in\mathbb{N}$ & $z\in\reals$ & $z=\prod_{i=1}^k x_{i}$ \\
    \hline
    Matrix transpose ($\top$) & $x\in\reals^{n\times m}$ & $z\in\reals^{m\times n}$ & $z_{ij}=x_{ji}$\\
    \hline
    Element-wise exponential ($\Exp$) & $x\in\reals^{n\times m}$ & $z\in\reals^{m\times n}$ & $z_{ij}=e^{x_{ij}}$\\
    \hline
    Frobenius norm ($\norm$) & $x\in\reals^{n\times m}$ & $z\in\reals$ & $z=\Vert x\Vert$\\
    \hline
    Element-wise indicator  ($\ind$) & $x\in\reals^{n\times m}$ & $z\in\reals^{n\times m}$ & $z_{ij}= (x_{ij}>0)? 1 : 0$\\
    \hline
    Sorting ($\sort$) & 
    $z\in\reals^{n}$ & $x\in\reals^{n}, y\in\reals^{n}$ & permute $x$ based on $y$
    %$z=x_{\argsort(y)}$ 
    \\
    \hline
    \end{tabular}
    \end{adjustbox}
    \\
    % \xiangru{We had $x\in \reals^m$ before, do we need to explain why it can be matrix $\reals^{n\times m}$ here?}.
    %according to  $\mathsf{I}$.
    %.For a vector $y$, $\argsort(y)$ returns the sequence of indices such that the corresponding values ($y_i$) are in ascending sorted order.
    %For example, $\argsort(5, 13, 8) = \{0,2,1\}$.
    %Given a permutation of indices $\sf{I}$ and a vector $x$ of the same length,  
    %
    %That is, for $x=(9,3,7)$ and $y = (5, 13, 8)$,  the reordered $x_{\argsort(y)}=(9,7,3)$.
    %$\Prod(x,k)$: product of the first $k$ elements in vector $x$;  $\ind(x)$ : boolean indicating if input is greater than 0; $\sort(x,y)$: reorders $x$ based on the permutation that sorts $y$ in ascending order.
    %\sayan{Explain the special symbols/operations here.}
    %Note: operation $\argsort$ returns the indices that sort a list in ascending order by value.
    \label{tab:op}
\end{table}
\vspace{-0.2cm}

% \begin{example}
% \label{exm:lin-oa-example}
Given an input interval $X=[\underline{x},\overline{x}]\subseteq \reals$, piecewise linear bounds and corresponding piecewise linear relations for operators $\Exp$, $\divi$ and $\ind$ are shown in Table~\ref{tab:lin-exm}. These bounds confirm these operations' linearly over-approximability. %when the input range of $\ind$ contains $x=0$.  

\begin{table}[t]
    \caption{\small Bounds for operations $\Exp$, $\divi$ and $\ind$.  Corresponding piecewise linear relations can be written as $R=\{(x,f(x))|\ell_X(x)\leq f(x)\leq u_X(x), \underline{x}\leq x\leq \overline{x}\}$.}
    \centering
    \begin{tabular}{|c|c|c|}
        \hline
       Operation $f(x)$ & Linear lower bound $\ell_{X}$ & Linear upper bound $u_{X}$ \\
       %\hline
       %$\mul(x,y)$ & $\underline{y}x+\underline{x}y-\underline{x}\underline{y}$ & $\overline{y}x+\underline{x}y-\underline{x}\overline{y}$ \\
       \hline
       $\Exp(x)$ & $\exp(\underline{x})(x-\underline{x})+\exp(\underline{x})$ & $\frac{\exp(\overline{x})-\exp(\underline{x})}{\overline{x}-\underline{x}}(x-\underline{x})+\exp(\underline{x})$ \\
       \hline
       $\divi(x)$ & $-\frac{1}{\underline{x}^2}(x-\underline{x})+\frac{1}{\underline{x}}$ & $-\frac{1}{\underline{x}\overline{x}}(x-\underline{x})+\frac{1}{\underline{x}}$\\
       \hline
       $\ind(x)$ & $\left\{\begin{matrix}
        0 & \mbox{ if } x\leq 0\\
        1 & \mbox{ if } x>0
    \end{matrix}\right.$ & $\left\{\begin{matrix}
        0 & \mbox{ if } x\leq 0\\
        1 & \mbox{ if }x> 0
    \end{matrix}
    \right.$\\
        \hline       
    \end{tabular}
    \label{tab:lin-exm}
\end{table}
%\end{example}

\subsection{Rendering Gaussian Splat Scenes}
\label{sec:render_bg}

A Gaussian splat scene models a three dimensional scene as a collection of Gaussians with different means, covariances, colors, and transparencies.  
The  rendering algorithm for Gaussian splat scenes  projects these 3D Gaussians as 2D splats on the camera's image plane~\cite{10.1145/3592433}. 

Given a world coordinate frame $w$, 
a 3D Gaussian is  a tuple $\langle \sf{uw}, \sf{Mw}, \sf{o}, \sf{c} \rangle$ where,
$\sf{uw} \in \reals^{3}$ is its {\em mean} position in the world coordinates, 
$\sf{Mw}  \in \reals^{3\times3}$ is its covariance matrix\footnote{The covariance matrix $\sf{Cov}$ is represented by its 
Cholesky factor $\sf{Mw}$, i.e., $\sf{Cov}=\sf{Mw} (\sf{Mw})^T$.}, 
$\sf{o}\in [0,1]$ is its opacity, and 
$\sf{c} \in [0,1]^{3}$ is its colors in RGB channels. 
A {\em scene\/} $\mathsf{Sc}$ is a collection of Gaussians in the world coordinate frame, indexed by a finite set $\mathsf{I}$. A visualization of a scene with three Gaussians is shown on the right side of Figure~\ref{fig:3D-gsplat}.

\begin{figure}[t]
    \centering
    \includegraphics[width=0.75\linewidth]{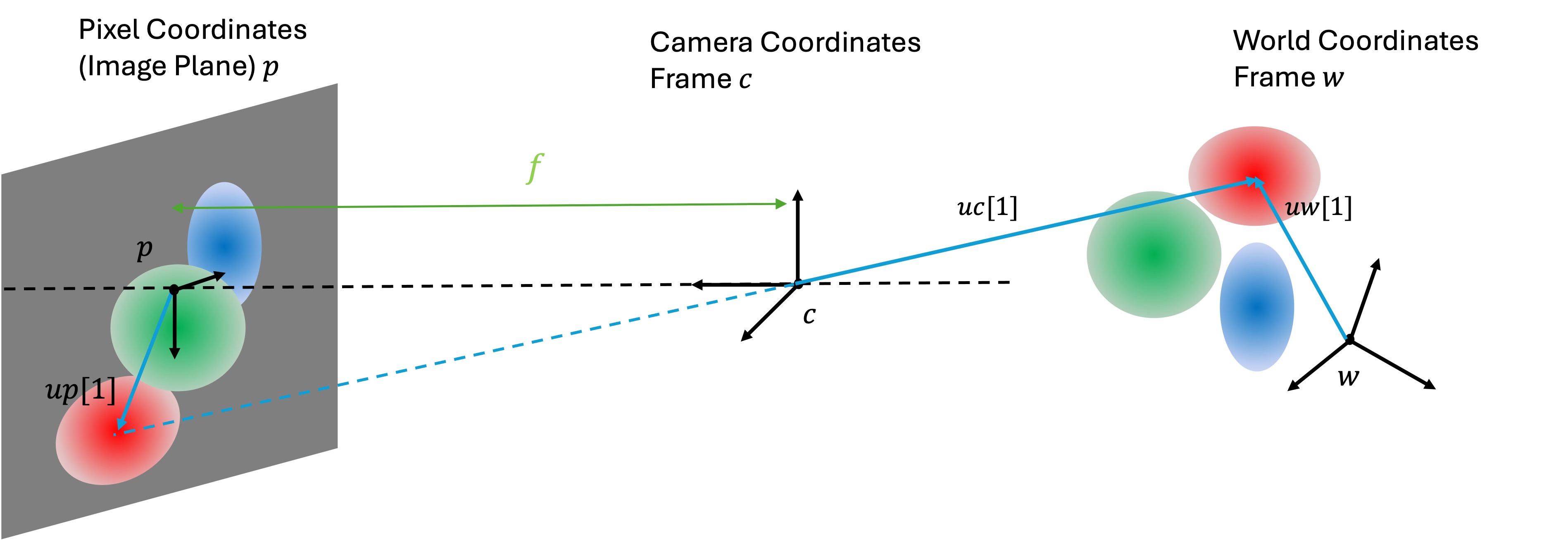}
    \caption{\small 
    Visualization of a 3D Gaussian Scene: Gaussians are defined in world coordinates (right), transformed into camera coordinates (middle), and projected onto the image plane/pixel coordinates (left). Here, $f$ denotes the focal length, and $\mathsf{uw}[1]$, $\mathsf{uc}[1]$, and $\mathsf{up}[1]$ represent the red Gaussian's mean in the respective coordinate systems.}
    \label{fig:3D-gsplat}
\end{figure}
% \vspace{-0.2cm}

A {\em camera} $C$ is defined by a translation $\sf{t}\in\reals^3$ and a rotation matrix $\sf{R}\in\reals^{3\times 3}$ relative to the world coordinates, a focal length $(\mathsf{f_x, f_y}) \in \reals^2$, and an  offset $(\mathsf{c_x,c_y}) \in \reals^2$. The position and the rotation  define the extrinsic matrix of the camera, and the latter parameters define the intrinsic matrix. Together, they define how 3D objects in the world are transformed to 2D pixels on the camera's image plane. An {\em Image} of size $\sf{W} \times \sf{H}$ (denoted as $\sf{WH}$) is defined by a collection of  pixel colors, where each pixel color $\sf{pc} \in [0,1]^{3}$.

% \paragraph{Rendering.}
Given a  scene $\mathsf{Sc}$ and a camera $\mathsf{C}$, the image seen by the camera is obtained by projecting the 3D Gaussians in the scene onto the camera's image plane and blending their colors. 
Each pixel's color is blended based on the distance between the pixel's position and the 2D Gaussian's mean, as well as the depth ordering of the 3D Gaussians from the image plane.
In detail, the rendering algorithm $\gsplat$ (shown in Listing~\ref{alg:gsplat}) computes the color of each pixel $\mathsf{u} \in \sf{WH}$ in the following four steps. The first three steps are for each Gaussian and the final step is on the aggregate. 
\begin{description}
\item[Step 1.] Transforms each 3D Gaussian $\sf{i} \in \mathsf{I}$ in the scene $\mathsf{Sc}$ from the world coordinates ($w)$ into camera coordinates ($c$) (Line~\ref{alg1:l2}-\ref{alg1:l3}). This is done by applying the rotation $\mathsf{R}$ and the translation $\mathsf{t}$ to the mean $\mathsf{uw[i]}$ and the rotation to the covariance $\mathsf{Mw[i]}$.
\item[Step 2.] Transforms each of the Gaussians $\sf{i}$ from 3D camera coordinates ($c$) to 2D pixel coordinates ($p$) on the image plane (Line~\ref{alg1:l4}-\ref{alg1:l7}) (Figure~\ref{fig:3D-gsplat}) by applying the intrinsic matrix  
$\mathsf{K}=\begin{bmatrix}
    \mathsf{f_x} & 0 &  \mathsf{c_x}\\
    0 & \mathsf{f_y} & \mathsf{c_y}
\end{bmatrix}$ to $\mathsf{uc[i]}$
and the projective transformation $\sf{J[i]}$ to $\sf{Mc[i]}$. And computes depth $\sf{d[i]}$ to image plane for Gaussian $\sf{i}$  simply as the third coordinate of $\mathsf{uc[i]}$ (as  the camera axis is orthogonal to the image plane).
\item[Step 3.] 
Next, computes the inverse of covariance $\sf{Conic[i]}$ for the $\sf{i^{th}}$ 2D Gaussian  in Step~2 (Line~\ref{alg1:l8}). 
%\sayan{Explain what is happening to the covariances.}  
Effective opacity of each Gaussian $\sf{a[i]}$ at pixel position $\sf{u}$ is computed as opacity $\sf{o[i]}$ weighed by the probability density of the $\sf{i^{th}}$ 2D Gaussian at pixel $\sf{u}$ (Line~\ref{alg1:l9}-\ref{alg1:l10}). 
The complicated expression for $\sf{a[i]}$ is the familiar definition of the Gaussian distribution with covariance $\sf{Mp[i]}$ written using the operators in Table~\ref{tab:op}.
\item[Step 4.]  Finally, 
the color of the pixel $\sf{pc}$ is determined by blending the colors of all the Gaussian weighted by their effective opacities $\sf{a}$ and adjusted according to their depth to the image plane $\sf{d}$ (Line~\ref{alg1:l11}).
\end{description}

\begin{comment}
\sayan{Old text.}
and a projection matrix and a focal matrix $F=\begin{bmatrix}
    f_x & 0\\
    0 & f_y
\end{bmatrix}$, where $f_x,f_y\in\reals$ denote focal lengths,  $(x_0,y_0) \in\reals^2$ are called the principal point offset.
 it calculates the 2D covariance of Gaussian $\mathsf{S}$ and its inverse $\mathsf{Sinv}$ (Line 7 \& 8), along with the probability densities of 2D Gaussians $\mathsf{a}$ at given pixel $\mathsf{u}$ (Line 9 \& 10). Finally, pixel color $pc$ is generated by feeding probability densities $\mathsf{a}$, Gaussians' colors $c$ and Gaussians' depth $d$ into function $\pcsort$ (Line 11).
\end{comment}
% \sayan{Stopping here}

%\sayan{changed $(x_0,y_0)$ to $(c_x,c_y)$. (2) Let us first introduce these basic quantities and then define the matrices in terms of these quantities as needed.}

%Next, we introduce our target rendering Function $\gsplat$ \cite{} that generates images from given gaussian splat scene and camera. The inputs to this function are: 1) a Gaussian Splat Scene $\mathsf{Sc}$, 2) a Camera $\mathsf{C}$, and 3) an image pixel position $\mathsf{u} \in \nicustom$. The output of this function is the pixel color $\mathsf{pc}$ at position $u$.

\begin{algorithm}[t]
    \caption{$\gsplat (\mathsf{Sc},\mathsf{C},\mathsf{u})$}
    \label{alg:gsplat}
    
    \SetKwProg{Fn}{Function}{:}{}
    \For{$\forall \mathsf{i} \in \mathsf{I}$}{
        $\mathsf{uc}[\mathsf{i}] \gets \mat(\mathsf{R}, \add(\mathsf{uw}[\mathsf{i}], -\mathsf{t}))$\; \label{alg1:l2}
        $\mathsf{Mc}[\mathsf{i}] \gets \mat(\mathsf{R}, \mathsf{Mw}[\mathsf{i}])$\; \label{alg1:l3} 
        $\mathsf{J}[\mathsf{i}] \gets \begin{bmatrix}
            \mul(\mathsf{f_x}, \mathsf{uc}[\mathsf{i},2]) & 0 & -\mul(\mathsf{f_x}, \mathsf{uc}[\mathsf{i},0])\\
            0 & \mul(\mathsf{f_y}, \mathsf{uc}[\mathsf{i},2]) & -\mul(\mathsf{f_y}, \mathsf{uc}[\mathsf{i},1])
        \end{bmatrix}$\; \label{alg1:l4}
        $\mathsf{up}[\mathsf{i}] \gets \mat(\mathsf{K}, \mathsf{uc}[\mathsf{i}])$\; \label{alg1:l5}
        $\mathsf{Mp}[\mathsf{i}] \gets \mat(\mathsf{J}[\mathsf{i}], \mathsf{Mc}[\mathsf{i}])$\; \label{alg1:l6}
        $\mathsf{d}[\mathsf{i}] \gets \mathsf{uc}[\mathsf{i},2]$\; \label{alg1:l7}
        $\mathsf{Conic}[\mathsf{i}] \gets \inv(\mat(\mathsf{Mp}[\mathsf{i}], \mathsf{Mp}[\mathsf{i}]^\top))$\; \label{alg1:l8}
        $\mathsf{q}[\mathsf{i}] \gets \mat(\add(\mul(\mathsf{d}[\mathsf{i}], \mathsf{d}[\mathsf{i}], \mathsf{u}), -\mul(\mathsf{d}[\mathsf{i}], \mathsf{up}[\mathsf{i}])), \mathsf{Conic}[\mathsf{i}], \mathsf{Mp}[\mathsf{i}])$\; \label{alg1:l9}
        $\mathsf{a}[\mathsf{i}] \gets \mat(\mathsf{o}[\mathsf{i}], \Exp(\mat(-\frac{1}{2}, \mathsf{q}[\mathsf{i}], \mathsf{q}[\mathsf{i}]^\top)))$\; \label{alg1:l10}
    }
    $\mathsf{pc} \gets \pcsort(\mathsf{a}, \mathsf{c}, \mathsf{d})$\; \label{alg1:l11}
    \Return{$\mathsf{pc}$}\;
\end{algorithm}

This last step is implemented in the $\pcsort$ (shown in Listing~\ref{alg:pcsort}) function: it  sorts the  Gaussians based on their depth to the image plane $\sf{d}$, resulting in reordered effective opacities $\sf{as}$ and Gaussian colors  $\mathsf{cs}$ (Line~\ref{lin:alg-pcsort-1}-\ref{lin:alg-pcsort-2}). Then for each Gaussian, its transparency $\sf{vs[i]}$ is computed as $1-\sf{as[i]}$ (Line ~\ref{lin:alg-pcsort-3}). The transmittance of $\sf{i^{th}}$ Gaussian is obtained as the cumulative product of transparency of all Gaussians in front of it (Line ~\ref{lin:alg-pcsort-4}). Finally, the pixel color $\mathsf{pc}$ is determined by summation of all Gaussian colors $\sf{cs}$, weighted by their respective transmittance $\mathsf{Ts}$ and effective opacities $\mathsf{as}$ (Line ~\ref{lin:alg-pcsort-5}).

% \begin{algorithm}[t]
% \caption{$\pcsort(\mathsf{a},\mathsf{c},\mathsf{d})$}
% \label{alg:pcsort}
% % \begin{minipage}[t]{\textwidth}
%      $\mathsf{as}\gets\sort (\mathsf{a},\mathsf{d})$\; \label{lin:alg-pcsort-1} 
%      $\mathsf{cs}\gets\sort (\mathsf{c},\mathsf{d})$\; \label{lin:alg-pcsort-2} 

%     \For{$\forall \mathsf{i}\in \mathsf{I}$}{
%      $\mathsf{vs}[\mathsf{i}]\gets\add(1,-\mathsf{as}[\mathsf{i}])$ \;\label{lin:alg-pcsort-3} 
    
%       $\mathsf{Ts}[\mathsf{i}]\gets\Prod(\mathsf{vs},\mathsf{i}-1)$\;\label{lin:alg-pcsort-4}
%     }
%       $\mathsf{pc}\gets\Sum(\mul(\mathsf{Ts},\mathsf{as},\mathsf{cs}))$\; \label{lin:alg-pcsort-5}
%     \Return{$\mathsf{pc}$}\;
% % \end{minipage}
% \end{algorithm}

\begin{comment}
\begin{figure}
    \centering
    \includegraphics[width=\linewidth]{figs/coordinates.jpg}
    \caption{
    Visualization of a 3D point in three coordinate systems: $Pw$ in world coordinates, $Pc$ in camera coordinates, and $Pi$ after projection onto the image plane.}
    \label{fig:coordinates}
\end{figure}
\end{comment}

For rendering the whole image, $\gsplat$ is executed for each pixel. Although, this algorithm looks involved, we note that Steps~1 and~2 only use the $\mat$, $\add$, $\mul$ operations in Table~\ref{tab:op}.
Step~3 involves computing the inverse of the covariance matrix and Step~4 requires sorting the depth.

% ... or should we say that these steps are linear transformations? 

% ... (We are giving a preview of where the difficulty is comping from. These are the points also to be made in the abstract and the intro).}

%Next, we demonstrate a toy example (Example~\ref{}) along with its rendered image, generated by proceeding Function $\gsplat$, as shown in Fig.~\ref{fig:toy-example}.

%\chenxi{description of this toy example will be added}

\begin{comment}
\begin{figure}
    \centering
    \includegraphics[width=0.45\linewidth]{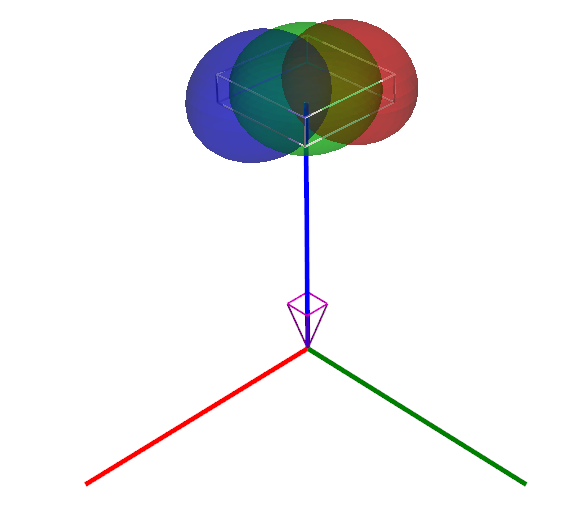}
    \includegraphics[width=0.45\linewidth]{}
    \caption{
    \textit{Left}: Visualization of a 3D Gaussian Splatting Scene comprising three anisotropic Gaussians, represented as ellipsoids corresponding to their positional uncertainty regions ($3\sigma$); and a Camera positioned at the origin facing up-ward. \textit{Right}: Rendered Image ($20 \times 20$) based on Function $\gsplat$.}
    \label{fig:toy-example}
\end{figure} 
\end{comment}

\begin{table}
\centering
\begin{minipage}[t]{0.40\textwidth}
  \vspace*{0pt} % Ensure alignment at the top
  \begin{algorithm}[H]
    \caption{$\pcsort(\mathsf{a},\mathsf{c},\mathsf{d})$}
    \label{alg:pcsort}
    $\mathsf{as}\gets\sort (\mathsf{a},\mathsf{d})$\; \label{lin:alg-pcsort-1} 
    $\mathsf{cs}\gets\sort (\mathsf{c},\mathsf{d})$\; \label{lin:alg-pcsort-2} 
    \For{$\forall \mathsf{i}\in \mathsf{I}$}{
      $\mathsf{vs}[\mathsf{i}]\gets\add(1,-\mathsf{as}[\mathsf{i}])$ \;\label{lin:alg-pcsort-3} 
      $\mathsf{Ts}[\mathsf{i}]\gets\Prod(\mathsf{vs},\mathsf{i}-1)$\;\label{lin:alg-pcsort-4}
    }
    $\mathsf{pc}\gets\Sum(\mul(\mathsf{Ts},\mathsf{as},\mathsf{cs}))$\; \label{lin:alg-pcsort-5}
    \Return{$\mathsf{pc}$}\;
  \end{algorithm}
\end{minipage}
\hfill
\begin{minipage}[t]{0.59\textwidth}
  \vspace*{0pt} % Ensure alignment at the top
  \begin{algorithm}[H]
    \caption{$\pcind(\mathsf{a},\mathsf{c},\mathsf{d})$}
    \label{alg:pcind}
    \For{$\forall \mathsf{i}\in \mathsf{I}$}{
      \For{$\forall \mathsf{j}\in \mathsf{I}$}{
        $\mathsf{V}[\mathsf{i},\mathsf{j}]\gets \add(1,-\mul(a[\mathsf{j}],\ind(\mathsf{d}[\mathsf{i}]-\mathsf{d}[\mathsf{j}])))$\; \label{lin:alg-pcind-1}
      }
      $ \mathsf{T}[\mathsf{i}]\gets\Prod(\mathsf{V}[\mathsf{i}],\ngcustom)$\; \label{lin:alg-pcind-2}
    }
    $\mathsf{pc}\gets\Sum(\mul(\mathsf{T},\mathsf{a} ,\mathsf{c}),\ngcustom)$\; \label{lin:alg-pcind-3}
    \Return{$\mathsf{pc}$}\;
  \end{algorithm}
\end{minipage}
\end{table}
\vspace{-0.2cm}

\paragraph{Abstract rendering problem.} Our goal is to design an algorithm that for all pixel $\mathsf{u}\in \mathsf{WH}$, it takes as input a linear set of scenes $\mathsf{\mathbf{Sc}}$ and  cameras $\mathsf{\mathbf{C}}$, and outputs a linear set of pixel color $\mathsf{\mathbf{pc}}$ such that 
$\gsplat(\mathsf{Sc},\mathsf{C},\mathsf{u}) \in \mathsf{\mathbf{pc}}$ 
% \xiangru{Why is the input $\mathsf{u}$ in the definition of the algorithm missing?}
, for each 
$\mathsf{Sc}\in\mathsf{\mathbf{Sc}}$ and 
$\mathsf{C}\in\mathsf{\mathbf{C}}$.

\section{Abstract  Rendering}
\label{sec:abs-render}
Our approach for creating the abstract rendering algorithm, $\absrend$, takes a linear set of scenes and cameras as input and step-by-step derives the piecewise linear relation between each intermediate variable and input through $\gsplat$.
This process continues until the final step, where the piecewise linear relation between pixel values and inputs is established. The final piecewise linear relation is then used to compute linear bounds on each pixel, and by iterating over all pixels in the image, we obtain linear bounds for the entire image.

In $\gsplat$, the first two steps involve linear operations, making it straightforward to derive the linear relation at each step as a combination of the current relation and the linear bounds of the ongoing operations. However, matrix inversion in step 3, using the adjugate method\footnote{In the adjugate method, the matrix inverse is computed by dividing the adjugate matrix by the determinant}, can introduce large over-approximation errors due to uncertainty in the denominator, particularly when the matrix approaches singularity. Step 4 further exacerbates these errors with sorting, as depth order uncertainty may cause a single Gaussian to be mapped to multiple indices, leading to over-counting in the pixel color summation, and potentially causing pixel color overflow in extreme cases.

%However, for steps 3 and 4, it involves matrix inversion and sorting which are not linear operation, even not continuous operation, making applying linear bounding on those operations not tight enough given large input range.
%\sayan{discuss the problems with these two steps in more detail in 2-3 sentences.}

%In rendering Algorithm $\gsplat$, propagating linear bounds line by line leads to significant over-approximation due to the division operation in $\inv$ and indices uncertainty by $\sort$. 
To address these issues, we first introduce a Taylor expansion-based Algorithm $\matinv$ in Section~\ref{sec:inv}, capable of mitigating the uncertainty of division to be arbitrarily small by setting a sufficiently high Taylor order. Then in Section~\ref{sec:blending}, we propose $\pcind$ (shown in Listing~\ref{alg:pcind}) in replacement to $\pcsort$, which eliminates indices uncertainty and improves bound tightness by implementing  Gaussian color blending without the need for $\sort$ operation.
%\sayan{We need to discuss adjugate method earler in more detail to discuss the problem} \chenxi{maybe at the position explaining Algorithm $\gsplat$ and $\pcsort$}
Finally, we present a detailed explanation of $\absrend$ and prove its soundness in Section~\ref{sec:ab-rendering}.
%\chenxi{is there anything else we should add in section 3.3?}
%and present Algorithm $\absrend$ for abstract rendering under perturbation on 3D Gaussian scene and camera in Section~\ref{sec:ab-rendering}.

\subsection{Abstracting  Matrix Inverse by Taylor Expansion}
\label{sec:inv}

Linear bounds on division often lead to greater over-approximation errors compared to addition or multiplication. Thus, we propose a Taylor expansion-based algorithm, $\matinv$ (shown in Listing~\ref{alg:matinv}), which expresses the matrix inverse as a series involving only addition and multiplication. Though division is used to bound the remainder term, its impact can be minimized by increasing the Taylor order, effectively reducing the remainder to near zero.
%The basic method (also known as the adjugate method) for inverting a matrix  involves dividing the adjugate matrix by the determinant, which introduces significant over-approximation errors when considering sets of  inputs. This is  because the determinant in the denominator, which will also be a set of scalars, will cause the approximation range to blow-up. We propose a Taylor expansion-based method for computing bounds on  inverses of sets of matrices. This reduces the uncertainty introduced by the division  and provides  tighter linear bounds.

\begin{algorithm}
    % \SetAlgoCaptionLayout{line}{}{}{#2} % Removes "Algorithm X: " heading
    \caption{$\matinv(\mathsf{X}; \mathsf{X0}, \mathsf{k})$}
    \label{alg:matinv}
    \SetKwProg{Fn}{Function}{:}{}
    %pick nonsingular $\mathsf{A0}$ in the same shape as $A$, and select Taylor order $\mathsf{k}\in\mathbb{N}$\;
    
    $\mathsf{IXX0}\gets\add(I,-\mat(\mathsf{X},\mathsf{X0}))$\;
    \label{lin:alg-matinv-1} 

    Assert $\norm(\mathsf{IXX0})<1$\;
    \label{lin:alg-matinv-2}

    \For{$
    \forall \mathsf{i}\in \{0,1,\cdots, \sf{k}\} $}{

     $\sf{Xa}=\mat(\mathsf{X0},\pow(\sf{IXX0},\sf{i}))$
     \label{lin:alg-matinv-3}
     
    }
    $\mathsf{Xp}\gets\Sum(\sf{Xa},\sf{k}+1)$\;
    \label{lin:alg-matinv-4} 
    
     $\mathsf{Eps}\gets\mul(\norm(\mathsf{X0}),\pow(\norm(\mathsf{IXX0}),k+1),\divi(\add(1,-\norm(\mathsf{IXX0}))))$\;
     \label{lin:alg-matinv-5}

    $\mathsf{lXinv}\gets\add(\mathsf{Xp},-\mathsf{Eps})$\;
    \label{lin:alg-matinv-6} 

    $\mathsf{uXinv}\gets \add(\mathsf{\mathsf{Xp},\mathsf{Eps}})$\;
    \label{lin:alg-matinv-7} 
    \Return{$\langle\mathsf{lXinv},\mathsf{uXinv}\rangle$}\;
\end{algorithm}
%\vspace{-0.2cm}

%\sayan{Shouldn't the input to this alg be a set of matrices? Stopping here.} 
%\chenxi{I currently provide detailed explanations of each line in $\matinv$ for a fixed input, along with the meaning behind them, and separately describe the output when handling a set of inputs as mere bound propagation through uninteresting operations at the paragraph after proof of Lemma 2. Should I combine these two paragraphs?}
%
The $\matinv$ function takes non-singular matrix $\mathsf{X}$ as input and uses a reference matrix $\mathsf{X0}$ along with a Taylor expansion order $\mathsf{k}$ as parameters. It outputs $\mathsf{lXinv}$ and $\mathsf{uXinv}$, representing the lower and upper bounds of $\inv$ operation. 
Given fixed input $\sf{X}$, Algorithm $\matinv$ operates as follows: First, it calculates the deviation from the reference matrix, denoted as $\mathsf{IXX0}$ (Line~\ref{lin:alg-matinv-1}). Next, it checks the norm of $\mathsf{IXX0}$ to ensure that the Taylor expansion does not diverge as the order increases (Line~\ref{lin:alg-matinv-2}). Then, it computes the $\sf{k^{th}}$ order Taylor approximation $\sf{Xa}$, and sum them to obtain $\sf{k^{th}}$ order Taylor polynomial $\mathsf{Xp}$ (Line~\ref{lin:alg-matinv-3} - Line~\ref{lin:alg-matinv-4}), and computes an upper bound of the remainder norm $\mathsf{Eps}$ (Line~\ref{lin:alg-matinv-5}). Finally, the lower (upper) bound of matrix inverse $\mathsf{lXinv}$ ($\mathsf{uXinv}$) is achieved by subtracting (adding) $\sf{Eps}$ to $\sf{Xp}$ (Line~\ref{lin:alg-matinv-6} - Line~\ref{lin:alg-matinv-7}).

\begin{comment}
    Given any non-singular matrix $\mathsf{X}$, the output of Algorithm~$\matinv$ $\langle\mathsf{lXinv},\mathsf{uXinv}\rangle$, provides lower and upper bounds for the inverse of $\mathsf{X}$.
\end{comment}

\begin{lemma}
\label{lem:mat-inv}
Given any non-singular matrix $\sf{X}$, the output of Algorithm~$\matinv$ $\langle\mathsf{lXinv},\mathsf{uXinv}\rangle$ satisfies that:
\[
    \mathsf{lXinv}\leq \inv(\sf{X})\leq \mathsf{uXinv}
\]
\end{lemma}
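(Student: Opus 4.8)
The plan is to recognize $\matinv$ as implementing the Neumann series for the matrix inverse expanded about the reference matrix $\mathsf{X0}$, together with an explicit tail bound. First I would record the algebraic identity underlying the algorithm: for the matrix $E := I - \mathsf{X}\,\mathsf{X0}$ (the quantity $\mathsf{IXX0}$ of Line~\ref{lin:alg-matinv-1}), one has $\mathsf{X}\,\mathsf{X0} = I - E$, hence $\mathsf{X}^{-1} = \mathsf{X0}\,(I-E)^{-1}$ whenever $I-E$ is invertible. The assertion on Line~\ref{lin:alg-matinv-2} gives $\|E\| < 1$ (in Frobenius norm, which is submultiplicative and dominates the spectral radius), so the Neumann series $(I-E)^{-1} = \sum_{i=0}^{\infty} E^i$ converges absolutely. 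Therefore $\mathsf{X}^{-1} = \sum_{i=0}^{\infty} \mathsf{X0}\, E^i$, and the partial sum $\mathsf{Xp} = \sum_{i=0}^{k} \mathsf{X0}\, E^i$ computed on Lines~\ref{lin:alg-matinv-3}--\ref{lin:alg-matinv-4} satisfies $\mathsf{X}^{-1} - \mathsf{Xp} = \sum_{i=k+1}^{\infty} \mathsf{X0}\, E^i$.

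Next I would bound the remainder term entrywise via its Frobenius norm. Submultiplicativity of $\|\cdot\|$ gives $\|\mathsf{X0}\,E^i\| \le \|\mathsf{X0}\|\,\|E\|^i$, so
\[
\Big\| \mathsf{X}^{-1} - \mathsf{Xp} \Big\| \;\le\; \sum_{i=k+1}^{\infty} \|\mathsf{X0}\|\,\|E\|^{i} \;=\; \|\mathsf{X0}\|\,\frac{\|E\|^{k+1}}{1 - \|E\|},
\]
where the geometric sum converges precisely because $\|E\| < 1$. The right-hand side is exactly $\mathsf{Eps}$ as defined on Line~\ref{lin:alg-matinv-5}. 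Since the Frobenius norm dominates the absolute value of every entry, $|(\mathsf{X}^{-1})_{ij} - (\mathsf{Xp})_{ij}| \le \mathsf{Eps}$ for all $i,j$, i.e. $-\mathsf{Eps} \le \mathsf{X}^{-1} - \mathsf{Xp} \le \mathsf{Eps}$ in the element-wise order. Rearranging gives $\mathsf{Xp} - \mathsf{Eps} \le \inv(\mathsf{X}) \le \mathsf{Xp} + \mathsf{Eps}$, which are precisely $\mathsf{lXinv}$ and $\mathsf{uXinv}$ from Lines~\ref{lin:alg-matinv-6}--\ref{lin:alg-matinv-7}.

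I would take care of two small technical points. One: I must confirm that $I - E = \mathsf{X}\,\mathsf{X0}$ is invertible — this follows from $\|E\|<1$ by the standard Neumann argument (if $(I-E)v = 0$ then $\|v\| = \|Ev\| \le \|E\|\,\|v\|$, forcing $v=0$), and consequently both $\mathsf{X0}$ and $\mathsf{X}$ are invertible, so $\mathsf{X0}(I-E)^{-1} = \mathsf{X0}(\mathsf{X}\mathsf{X0})^{-1} = \mathsf{X}^{-1}$ is legitimate; the hypothesis that $\mathsf{X}$ is non-singular makes the statement meaningful but the series argument also re-derives it. Two: I should be explicit about which side the reference sits on, since the algorithm uses the right-hand factorization $\mathsf{X}^{-1} = \mathsf{X0}(I - \mathsf{X}\mathsf{X0})^{-1}$ and $\pow(\mathsf{IXX0}, i)$ appears to the right of $\mathsf{X0}$ on Line~\ref{lin:alg-matinv-3} — a left-versus-right transcription error here would invalidate the telescoping, so this is worth checking against the listing. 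The main obstacle is not difficulty but bookkeeping: getting the non-commutative matrix algebra and the left/right placement of $\mathsf{X0}$ exactly consistent with the pseudocode, and justifying the passage from the Frobenius-norm tail bound to the element-wise inequalities that the lemma asserts.
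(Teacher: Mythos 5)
Your proposal is correct and follows essentially the same route as the paper's proof: a Neumann/Taylor expansion of $\inv(\mathsf{X})$ about the reference, a geometric-series tail bound via submultiplicativity of the Frobenius norm, and the passage from the norm bound to element-wise bounds. In fact your phrasing in terms of $E = I - \mathsf{X}\,\mathsf{X0}$ is slightly cleaner than the paper's $\mathsf{\Delta X}\cdot\mathsf{X_{ref}^{-1}}$ formulation (which carries an unstated sign flip), and you make explicit two steps the paper leaves implicit — the invertibility of $I-E$ and the fact that the Frobenius norm dominates each entry.
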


%\chenxi{
%Alternative: 
%}

Lemma~\ref{lem:mat-inv} demonstrates that $\matinv$ produces valid bounds for the $\inv$ operation with fixed input. When applied to a linear set of input matrices, the linear bounds of $\sf{lXinv}$ and $\sf{uXinv}$ are derived through line-by-line propagation of linear relations. Together, these bounds define the overall linear bounds for the $\inv$ operation on the given input set. To mitigate computational overhead from excessive matrix multiplications, we make the following adjustments for tighter bounds while using a small $\sf{k}$: (1) select $\mathsf{X0}$ as the inverse of the center matrix within the input bounds; (2) iteratively divide the input range until the assertion at Line~\ref{lin:alg-matinv-2} is satisfied; and (3) increase $\mathsf{k}$ until $\mathsf{Eps}$ falls below the given tolerance. Example~\ref{exm:matinv-vs-direct} highlights the benefits of using $\matinv$ for matrix inversion. 
% \xiangru{I thought we compute the linear bounds of the inverse matrix? Why it seems that we're obtaining concrete bounds here?}
%Consequently, when a linear set of matrices is input into Algorithm $\matinv$, it derives linear bounds on $\sf{lXinv}$ and $\sf{uXinv}$ through line-by-line linear bound propagation, and returns their union bounds as the overall linear bounds for $\inv$ operation over the input range. 

%Next, we discuss the tightness of linear bounds derived from Algorithm $\matinv$. Although a $\divi$ operation occurs in the computation of $\sf{Eps}$ at Line~\ref{lin:alg-matinv-5}, which introduces uncertainty in the denominator similar to the adjugate method, the upper bound on error $\sf{Eps}$ can be made arbitrarily small by choosing a sufficiently high Taylor order $\sf{k}$, provided the assertion in Line~\ref{lin:alg-matinv-2} is satisfied. 
%More practically, to avoid heavy computation cost introduced by high Taylor expansion order $\mathsf{k}$, we select parameter values and satisfy assertion as follows: 1) we choose $\mathsf{X0}$ as the inverse of the center matrix within the input bounds; 2) divide input range until the assertion at Line 2 is satisfied; 3) increase $\mathsf{k}$ until $\mathsf{Eps}$ falls below given tolerance. These adjustments enable Algorithm $\matinv$ to achieve linear bounds with significantly less over-approximation error compared to the adjugate method, even with a relatively small Taylor order $\mathsf{k}$. Example~\ref{exm:matinv-vs-direct} illustrates this improvement.

\begin{example}
\label{exm:matinv-vs-direct}
Given the constant lower and upper bounds of input matrices,
\begin{align*}
    \underline{X}=\begin{bmatrix}
    0.60 & -0.2 \\
    -0.02 & 0.90
    \end{bmatrix},\quad \overline{X}=\begin{bmatrix}
    0.90 & 0.02 \\ 
    0.02 & 1.30
\end{bmatrix},
\end{align*}
we combine lower bound of $\mathsf{lXinv}$ and upper bound of $\mathsf{uXinv}$ as overall bounds for $\inv$ operation, and take the norm difference $||\mathsf{uXinv} - \mathsf{lXinv}||$ 
% \xiangru{What norm? Frobenius?} 
as a measure of bound tightness. Using the adjugate method, we get $1.22$, while $\matinv$ improves this to $0.70$, which closely aligns with the empirical value of $0.66$.
\end{example}

\subsection{Abstracting Color Blending by Replacing Sorting Operator}
\label{sec:blending}

In $\pcsort$, the $\sort$ operation orders Gaussians by increasing depth so that those closer to the camera receive lower indices, enabling cumulative occlusion computation. However, we observe that instead of sorting, one can directly identify occluding Gaussians through pairwise depth comparisons.
Our algorithm, $\pcind$, iterates over Gaussian pairs and uses a boolean operator $\ind$ to select those with smaller depths relative to a given Gaussian. This approach avoids the complications associated with index-based accumulation (for example, the potential for counting the occlusion effect of the same Gaussian multiple times) and ultimately yields significantly tighter bounds on the pixel colors.

Algorithm \(\pcind\) takes as input the effective opacity \(\mathsf{a}\), Gaussian colors \(\mathsf{c}\), and Gaussian depths \(\mathsf{d}\) relative to the image plane, and outputs the pixel color \(\mathsf{pc}\). Its proceeds as follows: First, it computes the transparency between each pair of Gaussians, denoted as matrix \(\mathsf{V}\) (Line~\ref{lin:alg-pcind-1}). If the \(\sf{i^{th}}\) Gaussian is behind the \(\sf{j^{th}}\) ($\sf{d[i]-d[j]> 0}$), then $\ind(\sf{d[i]-d[j]})$ outputs $1$ and value of \(\mathsf{V[i,j] }\) is less than $1$, indicating that the contribution of the \(\sf{i^{th}}\) Gaussian's color is attenuated; otherwise, $\ind(\sf{d[i]-d[j]})$ outputs $0$ and value of \(\mathsf{V[i,j]}\) is set to be exact $1$, meaning the contribution of the \(\sf{i^{th}}\) Gaussian's color is unaffected by the \(\sf{j^{th}}\). Next, the combined transmittance for all Gaussians \(\mathsf{T}\) is computed (Line~\ref{lin:alg-pcind-2}). Finally, the pixel color \(\mathsf{pc}\) is obtained by aggregating the colors of all Gaussians, weighted accordingly (Line~\ref{lin:alg-pcind-3}). 

%\sayan{This lemma statement needs to be more precise: $\forall ... $ BlendSort(...) = BlendInd(...).  (2) Something is a little odd here in that you are saying that a discontinuous operation will be replaced by an identical continuous op? (3) Explain in a 1-2 sentences at the beginning of this section what is the main idea behind this equivalence.}
%\chenxi{No, Blend\_Ind is still a discontinuous function. Its better performance comes from each Gaussian being counted only once in the final summation, while in Blend\_Sort, the same Gaussian can be mapped to multi indices after sorting, and eventually be counted multiple times in the final summation.}

\begin{lemma}
\label{lem:sort-to-ind}
For any given inputs --- effective opacities $\sf{a}$, colors $\sf{c}$ and depth $\sf{d}$:
\[
\pcsort(\sf{a},\sf{c},\sf{d})=\pcind(\sf{a},\sf{c},\sf{d})
\]
%$\forall \sf{a}\in[0,1]^{N}, \sf{c}\in[0,1]^{N\times 3}$ and $\sf{d}\in\reals^N$, $\pcsort(\sf{a},\sf{c},\sf{d})=\pcind(\sf{a},\sf{c},\sf{d})$.
%and Function~$\pcind$ produce the same output.
\end{lemma}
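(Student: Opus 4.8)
The plan is to show that the two algorithms compute the same pixel color by establishing a term‑by‑term correspondence under the depth ordering. First I would introduce notation for a permutation $\pi$ that sorts the Gaussians by increasing depth, i.e. $\mathsf{d}[\pi(1)] \le \mathsf{d}[\pi(2)] \le \cdots \le \mathsf{d}[\pi(n)]$, where $n = |\mathsf{I}|$; then $\mathsf{as}[\ell] = \mathsf{a}[\pi(\ell)]$ and $\mathsf{cs}[\ell] = \mathsf{c}[\pi(\ell)]$ by the definition of $\sort$ in Table~\ref{tab:op}. The goal is to rewrite both sides as $\sum_{\mathsf{i}\in\mathsf{I}} (\text{transmittance of } \mathsf{i}) \cdot \mathsf{a}[\mathsf{i}] \cdot \mathsf{c}[\mathsf{i}]$ with matching transmittance factors.

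Next I would compute the transmittance on each side explicitly. For $\pcsort$: by Lines~\ref{lin:alg-pcsort-3}--\ref{lin:alg-pcsort-4}, $\mathsf{Ts}[\ell] = \prod_{m=1}^{\ell-1}(1 - \mathsf{as}[m]) = \prod_{m=1}^{\ell-1}(1 - \mathsf{a}[\pi(m)])$, so the contribution of the Gaussian at sorted position $\ell$ is $\bigl(\prod_{m<\ell}(1-\mathsf{a}[\pi(m)])\bigr)\,\mathsf{a}[\pi(\ell)]\,\mathsf{c}[\pi(\ell)]$. Re‑indexing by $\mathsf{i} = \pi(\ell)$, the set $\{\pi(m) : m < \ell\}$ is exactly $\{\mathsf{j} : \mathsf{j} \text{ strictly precedes } \mathsf{i} \text{ in the sorted order}\}$. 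For $\pcind$: by Line~\ref{lin:alg-pcind-1}, $\mathsf{V}[\mathsf{i},\mathsf{j}] = 1 - \mathsf{a}[\mathsf{j}]\cdot\ind(\mathsf{d}[\mathsf{i}] - \mathsf{d}[\mathsf{j}])$, which equals $1 - \mathsf{a}[\mathsf{j}]$ when $\mathsf{d}[\mathsf{i}] > \mathsf{d}[\mathsf{j}]$ and equals $1$ otherwise; hence $\mathsf{T}[\mathsf{i}] = \prod_{\mathsf{j}\in\mathsf{I}} \mathsf{V}[\mathsf{i},\mathsf{j}] = \prod_{\mathsf{j} : \mathsf{d}[\mathsf{j}] < \mathsf{d}[\mathsf{i}]}(1 - \mathsf{a}[\mathsf{j}])$ (the factors with $\mathsf{d}[\mathsf{j}] \ge \mathsf{d}[\mathsf{i}]$, including $\mathsf{j} = \mathsf{i}$, contribute $1$). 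I would then argue that $\{\mathsf{j} : \mathsf{d}[\mathsf{j}] < \mathsf{d}[\mathsf{i}]\} = \{\pi(m) : m < \pi^{-1}(\mathsf{i})\}$, so $\mathsf{T}[\mathsf{i}] = \mathsf{Ts}[\pi^{-1}(\mathsf{i})]$, and conclude $\pcind = \sum_{\mathsf{i}} \mathsf{T}[\mathsf{i}]\mathsf{a}[\mathsf{i}]\mathsf{c}[\mathsf{i}] = \sum_{\ell}\mathsf{Ts}[\ell]\mathsf{as}[\ell]\mathsf{cs}[\ell] = \pcsort$, since $\Sum$ is permutation‑invariant.

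The main obstacle is the handling of ties in depth, i.e. when $\mathsf{d}[\mathsf{i}] = \mathsf{d}[\mathsf{j}]$ for $\mathsf{i}\neq\mathsf{j}$. In $\pcsort$ the sort picks some definite order among tied elements, so one of them gets a $(1-\mathsf{a})$ factor from the other; in $\pcind$, $\ind(0) = 0$ by the definition in Table~\ref{tab:op}, so neither tied Gaussian attenuates the other, and the two transmittance products differ. I would address this by showing the tie case is genuinely consistent only under a convention — either (a) assume depths are distinct (generic case, which is all that matters for abstract rendering where depths are intervals), or (b) note that $\sort$ must be defined to break ties so that strictly‑smaller‑depth is the only thing that attenuates, matching $\ind$; the cleanest route for the paper is to state the lemma for the case where $\mathsf{d}$ has distinct entries, or to observe that whichever tie‑breaking $\sort$ uses, the set $\{m < \pi^{-1}(\mathsf{i}) : \mathsf{d}[\pi(m)] = \mathsf{d}[\mathsf{i}]\}$ contributes the discrepancy and must be assumed empty. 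Aside from ties, everything else is a routine reindexing argument and permutation‑invariance of $\Sum$ and $\Prod$ over the full index set.
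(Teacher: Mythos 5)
Your proof follows essentially the same route as the paper's: introduce the sorting permutation, rewrite the cumulative transparency product of $\pcsort$ as a product over the full index set using $\ind$, identify it with the transmittance $\mathsf{T}[\mathsf{i}]$ of $\pcind$, and conclude by permutation-invariance of $\Sum$ and $\Prod$. Your point about ties is correct and is in fact a gap the paper's own proof silently assumes away: the step equating $\prod_{m<\ell}\bigl(1-\mathsf{a}[\pi(m)]\bigr)$ with $\prod_{m<\ell}\bigl(1-\mathsf{a}[\pi(m)]\cdot\ind(\mathsf{d}[\pi(\ell)]-\mathsf{d}[\pi(m)])\bigr)$ requires strictly increasing depths, so the lemma as stated implicitly needs the distinct-depth (or a tie-breaking) hypothesis you identify.
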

%\chenxi{I’ve stated Lemma 1 as you suggested, but it looks a bit off to me. Is this how it's supposed to appear?}

Lemma~\ref{lem:sort-to-ind} demonstrates that $\pcsort$ and $\pcind$ are equivalent, i.e., replacing $\pcsort$ with $\pcind$ in $\gsplat$ yields same output values for same inputs. Furthermore, under input perturbation, $\pcind$ produces tighter linear bounds than $\pcsort$, as shown in Example~\ref{exm:3g-scene}.

%\sayan{Can we say why the following statement is true? Can we connect with the challenge and the idea of the solution to say why the next statement holds?}

\begin{example}
\label{exm:3g-scene}
% Consider a scene with three Gaussians in red, green, and blue, with a camera positioned at the origin, facing upward, and a perturbation of $\pm 0.3$ applied to each coordinate. The upper bound on the red channel at pixel $(10, 2)$ in a $20 \times 20$ image is $0.6526$ using $\pcsort$, $0.575$ using $\pcind$, and $0.574$ from empirical sampling. Similar results are observed for the green and blue channels. A visualization of the upper bounds on the RGB channels for the entire image is shown in Figure~\ref{fig:sort-vs-ind}.

Consider a scene with three Gaussians (red, green, blue) and a camera at the origin facing upward, with a perturbation of ±0.3 applied to each coordinate. At pixel (10,2) in a 20×20 image, the red channel’s upper bound is 0.6526 using $\pcsort$, 0.575 using $\pcind$, and 0.574 via empirical sampling. Similar results are observed for the green and blue channels. Figure~\ref{fig:sort-vs-ind} visualizes the upper bounds on the RGB channels for the entire image.

%centered at $(-2,0, 10)$, $(0,0, 10.5)$ and $(2,0, 11)$,
%The rendered image consists of  pixels. Given a , the upper bound RGB colors at pixel $(10,2)$ are $[0.626, 0.625, 0.051]$ by $\pcsort$, $[0.575, 0.051, 0]$ by $\pcind$, $[0.574, 0.024, 0]$ from empirical sampling with 1000 perturbed camera positions. A visualization of this scene, along with upper bound images from $\pcsort$, $\pcind$ and empirical sampling are provided in Figure~\ref{fig:sort-vs-ind}. The results indicate that the bounds from $\pcind$ are significantly tighter than those from $\pcsort$, and closely over approximate the actual bounds. 
%From the result, we can observe that while $\pcsort$ over-approximates colors across the image (Mid-Left), $\pcind$ (Right-Middle) nearly matches the empirical ground truth (Right), demonstrating its effectiveness in reducing over-conservatism.
%\sayan{Whats the takeaway?}
\end{example}

\begin{figure}[t]
    \centering
    \includegraphics[width=0.24\linewidth,height=2.5cm]{figs/exp_3gauss_scene.png}
    \includegraphics[width=0.24\linewidth,height=2.5cm]{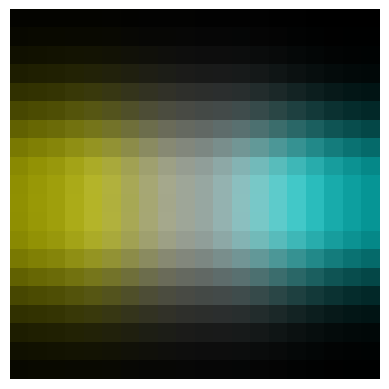}
    \includegraphics[width=0.24\linewidth,height=2.5cm]{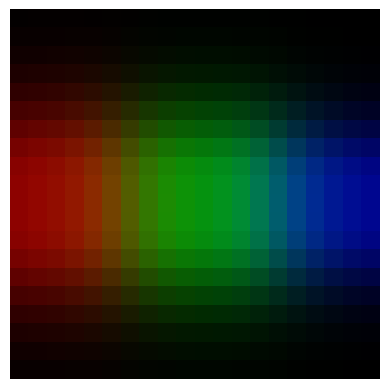}
    \includegraphics[width=0.24\linewidth,height=2.5cm]{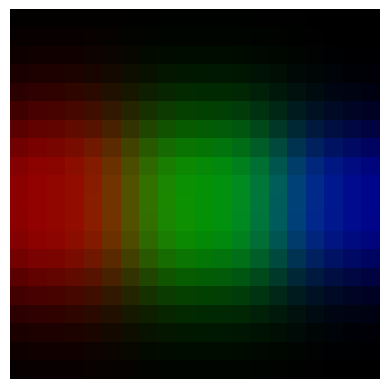}
    
    \caption{\small 
    % A visualization of the scene and computed upper bounds from Example~\ref{exm:3g-scene}. The scene and camera are on the left, followed by the upper bound images computed using $\pcsort$ (second from the left), $\pcind$ (middle), and empirical sampling (right). The  bounds from $\pcind$ are  tighter (darker) than those from $\pcsort$, and close to the sampled bounds.
    Visualization of the scene and upper bounds from Example~\ref{exm:3g-scene}: left, the scene and camera; mid left, bounds computed with $\pcsort$;  mid right, bounds from $\pcind$; and right, empirical sampling bounds. Note that the bounds from \pcind\pcind are tighter (darker) and closely match the sampled bounds.
    %\sayan{What do we observe from this example?}}
    }
    \label{fig:sort-vs-ind}
\end{figure}
\vspace{-0.2cm}

% We compare the Frobenius norm of $(\mathsf{uXinv}-\mathsf{lXinv})$ between the result computed by $\matinv$ and by 
\subsection{Abstract Rendering Algorithm $\absrend$}
\label{sec:ab-rendering}

Our abstract rendering algorithm, $\absrend$, is derived from modifications to $\gsplat$, as follows:
\begin{description}
\item[(1)] All input arguments and variables in $\gsplat$ are replaced with their linear set-valued counterparts (with boldface font). %For example, at Line~\ref{alg1:l2}, the vector-valued variable $\sf{uc[i]}\in\reals^3$ in $\gsplat$ is replaced by  $\mathbf{uc}\sf{[i]}\subseteq \reals^3$. %The bold face font is used to indicate set-valued variables. Similarly, the input argument $\sf{\mathbf{C}}$ becomes a set of cameras represented by a linear set of positions $\mathbf{t}$, a linear set of rotation matrices $\mathbf{R}$, etc.
\item[(2)] All the operations in Lines \ref{alg1:l2} - \ref{alg1:l7} and \ref{alg1:l9} - \ref{alg1:l10} are replaced by its corresponding linear relational operations. 
% \xiangru{That's not CROWN. It's the forward propagation method.}
\item[(3)] Line~\ref{alg1:l8} is replaced to 
\[
\mathbf{Conic}\sf{[i]} \gets \matinv(\mat(\mathbf{Mp}\sf{[i]}, \mathbf{Mp}\sf{[i]}^\top); \mathsf{Conic0}, \mathsf{k}),
\] 
where both $\mathbf{Conic}\sf{[i]}$ and $\mathbf{Mp}\sf{[i]}$ are linear set-valued variables; and matrix $\mathsf{Conic0}$ and Taylor order $k$ are two parameters\footnote{matrix $\mathsf{Conic0}$ is selected as the inverse of center matrix from linear set of $\mat(\mathbf{Mp}\sf{[i]}, \mathbf{Mp}\sf{[i]}^\top)$, and $k$ defaulting to 8. A detailed explanation on selection for $\mathsf{Conic0}$ and $\mathsf{k}$ is provided in Section~\ref{sec:inv}.}.

\item[(4)] Line~\ref{alg1:l11} is modified to 
\[\mathbf{pc \gets \pcind(\mathbf{a}, \mathbf{c}, \mathbf{d})},
\]
where all involved variables are linear set-valued.

\item[(5)] The final output of $\absrend$ is a linear set of colors $\mathbf{pc}\subseteq [0,1]^3$ representing the set of possible colors for pixel $\sf{u}$. 
\end{description}
The complete abstract rendering algorithm computes the set of possible colors for every pixels in the image. The next theorem establishes the soundness of $\absrend$, which follows from the fact that all the operations involved in $\absrend$ are linear relational operations of linearly over-approximable functions.

\begin{theorem}
\label{thm:all}
$\absrend$ computes sound piecewise linear over-approximations of $\gsplat$ for any linear sets of inputs $\mathbf{Sc}$ and $\mathbf{C}$.
\end{theorem}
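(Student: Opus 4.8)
The plan is to prove Theorem~\ref{thm:all} by structural induction over the sequence of operations in $\absrend$, establishing the invariant that after each line, the computed linear set-valued variable is a sound piecewise linear over-approximation of the corresponding concrete variable in $\gsplat$. Concretely, I would fix an arbitrary pixel $\mathsf{u}$ and an arbitrary concrete scene $\mathsf{Sc}\in\mathbf{Sc}$ and camera $\mathsf{C}\in\mathbf{C}$, and show by induction on the line number that the concrete value of each intermediate variable lies inside the linear set computed for its boldface counterpart; applying this to the final variable $\mathbf{pc}$ and then quantifying over all pixels yields the theorem.

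The base case is that the input linear sets $\mathbf{Sc}$ and $\mathbf{C}$ contain the chosen concrete $\mathsf{Sc}$ and $\mathsf{C}$ by assumption, and the chosen pixel $\mathsf{u}$ is a constant. For the inductive step I would go line by line. Lines~\ref{alg1:l2}--\ref{alg1:l7} and \ref{alg1:l9}--\ref{alg1:l10} use only $\mat$, $\add$, $\mul$, $\Exp$ and the assembly of the matrices $\mathsf{J}$ and $\mathsf{K}$; by Corollary~\ref{cor:op-lin-oa} each of these is linearly over-approximable, so composing the piecewise linear relation given for that operation (Section~\ref{sec:lin_bg}) with the piecewise linear relation carried from the previous step yields again a piecewise linear relation that soundly contains the concrete value — this is just the closure of piecewise linear relations under composition and the soundness of \crown-style bound propagation, which I would state as a small auxiliary lemma and then invoke. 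For Line~\ref{alg1:l8}, the replacement uses $\matinv$; here I appeal directly to Lemma~\ref{lem:mat-inv}, which guarantees $\mathsf{lXinv}\leq\inv(\mathsf{X})\leq\mathsf{uXinv}$ for every concrete non-singular $\mathsf{X}$ in the propagated input set, so the piecewise linear relation defined by those bounds contains the concrete $\mathsf{Conic}[\mathsf{i}]$ (modulo the non-singularity side condition discharged by the regularization assumption and the assertion at Line~\ref{lin:alg-matinv-2}). For Line~\ref{alg1:l11}, I invoke Lemma~\ref{lem:sort-to-ind} to replace $\pcsort$ by the semantically identical $\pcind$, and then observe that $\pcind$ is built entirely from $\add$, $\mul$, $\ind$, $\Prod$, $\Sum$, all of which are linearly over-approximable by Corollary~\ref{cor:op-lin-oa}, so the same composition argument applies.

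The main obstacle — and the step I would spend the most care on — is making the composition argument fully rigorous: I need a precise statement that if $y = f(x)$ with $f$ linearly over-approximable and $x$ ranges over a linear set $B$ with $R_x = \{(\mathit{in},x)\mid \ell(\mathit{in})\le x\le u(\mathit{in}), \mathit{in}\in B_0\}$ a sound piecewise linear relation to the original inputs, then propagating $f$ through $R_x$ produces a sound piecewise linear relation $R_y$ to the original inputs with the concrete $f(x)$ inside it. This requires being careful that the linear bounds for $f$ are chosen over the projection of $R_x$ onto the $x$-coordinates (a compact superset of the reachable $x$-values), that substituting the linear bounds on $x$ into the linear bounds for $f$ preserves the inequalities (monotonicity bookkeeping on signs of coefficients, exactly what \crown does), and that the constraint part $Ax\le b$ is handled by partitioning at the breakpoints (notably $x=0$ for $\ind$, as flagged before Corollary~\ref{cor:op-lin-oa}). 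Once this auxiliary lemma is in place, the rest of the proof is a routine walk through the five modifications listed in Section~\ref{sec:ab-rendering}, and the final quantification over pixels is immediate.
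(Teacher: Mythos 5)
Your proposal follows essentially the same route as the paper's proof: fix a concrete $\mathsf{Sc}\in\mathbf{Sc}$, $\mathsf{C}\in\mathbf{C}$, and pixel $\mathsf{u}$, then argue line by line that each concrete intermediate value is contained in its set-valued counterpart, invoking Lemma~\ref{lem:mat-inv} at Line~\ref{alg1:l8} and Lemma~\ref{lem:sort-to-ind} at Line~\ref{alg1:l11}, with all remaining lines handled by linear over-approximability of the constituent operations. If anything, you are more explicit than the paper about the composition/propagation lemma that makes the per-line containment rigorous — the paper simply asserts that the set-valued variables "are generated by their corresponding linearly relational operation" — so your version is sound and, on that point, slightly more careful.
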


\begin{proof}
Let us fix the input linear sets of scenes $\mathbf{Sc}$, cameras $\mathbf{C}$, and a pixel $\mathsf{u}$. Consider any specific scene $\mathsf{Sc} \in \mathbf{Sc}$ and camera $\mathsf{C} \in \mathbf{C}$.
At Line~\ref{alg1:l2}, $\mathsf{uc}\mathsf{[i]} \in \mathbf{uc}\mathsf{[i]}$, because  Line~\ref{alg1:l2} in $\gsplat$ only contains linearly over-approximable operations (e.g. $\add$, $\mul$), and $\mathbf{uc[i]}$ are generated by their corresponding linearly relational operation. Similar containment relationship continues through to Line~\ref{alg1:l7}, where $\mathsf{d[i]} \in \mathbf{d}\mathsf{[i]}$.

At Line~\ref{alg1:l8}, by Lemma~\ref{lem:mat-inv}, we establish that $\mathsf{Conic[i]}\in[\mathsf{lXinv[i]}, \mathsf{uXinv[i]}]$. Since each operation involved in $\matinv$ is linearly over-approximable, it follows that $\mathsf{lXinv[i]}\in\mathbf{lXinv}\mathsf{[i]}$ and $\mathsf{uXinv[i]}\in\mathbf{uXinv}\mathsf{[i]}$. Thus, for the set $\mathbf{Xinv}\mathsf{[i]}$, defined as the union of $\mathbf{lXinv}\mathsf{[i]}$ and $\mathbf{uXinv}\mathsf{[i]}$, it implies  $\mathsf{Conic[i]}\in\mathbf{Xinv}\mathsf{[i]}$.

Similarly, at Line~\ref{alg1:l9} and Line~\ref{alg1:l10}, a corresponding containment relationship exists for the same reason as in Line~\ref{alg1:l2}. At Line~\ref{alg1:l11}, by Lemma~\ref{lem:sort-to-ind}, we have $\mathsf{pc}=\pcind(\mathsf{a,c,d})$, and since all operations involved in $\pcind$ are also linearly over-approximable, it follows that $\pcind(\mathsf{a,c,d})\in \mathbf{pc}$, and therefore, $\mathsf{pc}\in \mathbf{pc}$.

As the result of $\absrend$ over-approximates that of $\gsplat$ line by line, it follows that for any specific scene $\mathsf{Sc} \in \mathbf{Sc}$ and camera $\mathsf{C} \in \mathbf{C}$, we have $\mathsf{pc}\in\mathbf{pc}$, thus completing the proof.

\end{proof}

\section{Experiments with Abstract Rendering}
\label{sec:exp}
We implemented the abstract rendering algorithm $\absrend$ described in  Section~\ref{sec:abs-render}. 
As mentioned earlier, the linear approximation of the continuous operations in Table~\ref{tab:op} is implemented using $\crown$ \cite{10.5555/3495724.3495820}. 

\paragraph{$\absrend$ on GPUs.}
GPUs excel at parallelizing tasks that apply the same operation simultaneously across large datasets, such as numerical computations and rendering.
Consider a $\mathsf{W} \times \mathsf{H}$ pixel image of a scene with $\sf{N}$ Gaussians. 
A naive per-pixel implementation requires $\mathsf{WH}\times\mathsf{N}$ computations for effective opacities $\mathsf{a}$, while  $\pcind$ requires  $O(\mathsf{ WH}\times\mathsf{N}^2)$ operations. 
For realistic values ($\mathsf{N}>10k$), this can become prohibitive. 
Our implementation adopts a tile-based approach, similar to original Gaussian Splatting ~\cite{10.1145/3592433}, where the whole image is partitioned into small tiles
% , and for color computation on each tile, Gaussians with very few probability densities are dropped out, 
and the computation of effective opacity $\mathsf{a}$ is batched per tile, allowing independent parallel processing
% for each tile is \sayan{batched together}. By processing tiles independently \sayan{and in parallel}, the abstract rendering time can be reduced.

Batching within each tile increases memory usage. Let $\mathsf{TS}$ denote the tile size and $\mathsf{BS}$ the batch size for processing Gaussians; then storing the linear bounds for $\mathsf{TS}^2$ pixels and $\mathsf{BS}$ Gaussians scales at $O(\mathsf{BS}\times\mathsf{TS}^2)$. Similarly, directly computing depth-based occlusion for all $\mathsf{N}^2$ Gaussian pairs in $\pcind$ are infeasible, so we batch the inner loop only, fixing a batch size for the outer loop to balance throughput and memory consumption.
% Batching the computation of $\mathsf{a}$ within each tile does increase memory usage. If $\mathsf{TS}$ is the tile size and $\mathsf{BS}$ is the batch size of Gaussians processed simultaneously, storing the linear bounds for these $\mathsf{TS}^2$ pixels and $\mathsf{BS}$ Gaussians scales as $O(\mathsf{BS}\times\mathsf{TS}^2)$. 
% Similarly, inside $\pcind$, directly computing depth-based blocking effects for $\mathsf{N}^2$ Gaussian pairs does not scale. We therefore batch only the inner loop to avoid an explicit $O(\mathsf{N}^2)$ implementation. Although we could, in principle, parallelize the outer loop as well, creating an $\mathsf{N}\times\mathsf{N}$ data structure would require too much memory. Instead, our implementation fixes a batch size for the outer loop, striking a balance between throughput and memory usage.

Thus, the choices of tile size $\mathsf{TS}$ and batch size $\mathsf{BS}$ provide two tunable knobs for balancing performance and memory. Larger $\mathsf{BS}$ and $\mathsf{TS}$ yield faster computation but higher memory usage, while smaller values conserve memory at the cost of speed. By adjusting these parameters, users can optimize $\absrend$ for their available computing resources.

% Thus, the  tile size $\mathsf{TS}$ and batch size $\mathsf{BS}$ choices give the user two knobs for striking performance–memory trade-offs: one in batching the computation of \sayan{$\sf{a}$} and another in controlling how many operations within $\pcind$ can run in parallel. Larger $\mathsf{BS}$ and $\mathsf{TS}$ offer greater speed-ups but risk exhausting available memory. Smaller $\mathsf{BS}$ and $\mathsf{TS}$ conserve memory at the expense of computation time. By tuning these parameters, the user of $\absrend$ can achieve a right compromise for available computing resources. 

\subsection{Benchmarks and Evaluation Plan}
\label{sec:eval}
% \sayan{RESUME HERE}
We evaluate $\absrend$ on six different Gaussian-splat scenarios chosen to cover a variety of scales and scene complexities.
These scenes are as follows: 
\texttt{BullDozer}~\cite{mildenhall2020nerf} is a scene with a LEGO bulldozer against an empty background. This scene makes it apparent how  camera pose changes influence the rendered image.
\texttt{PineTree} and \texttt{OSM} are scenes from~\cite{10745846}  originally described by triangular meshes. We converted them into Gaussian splats by training on images rendered from the mesh scenes. The resulting 3D Gaussian scenes are visually indistinguishable from the original meshes.
\texttt{Airport} is a large-scale scene  created from an airport environment in the Gazebo simulator.
\texttt{Garden} and \texttt{Stump}~\cite{10.1145/3592433} are real-world scenarios with large number of Gaussians.

All the scenes  were trained using the Splatfacto implementation of Gaussian splatting from Nerfstudio~\cite{nerfstudio}. For each scene, we generate multiple test cases \(\mathsf{Tc}\) by varying the nominal (unperturbed) camera pose. Table~\ref{tab:test_case} summarizes the details of these scenes.

\begin{table}[t]\centering

\caption{\small 
Gaussian splat scenes with their nominal (unperturbed) camera poses. For each scene, the camera pose is defined by the rotation $\mathsf{R}$ (given as XYZ Euler angles) and the translation $\mathsf{t}$. Multiple nominal camera poses are used for each scene.}
\label{tab:test_case}
\begin{tabular}{|l|l|l|l|l|l|}
\hline
$\mathsf{Sc}$ & $\mathsf{N}$ & $\mathsf{Tc}$ & $\mathsf{R}$ & $\mathsf{t}$ \\
\hline
\texttt{BullDozer} &56989 & \texttt{BD-1} &[-0.68,0.08,3.12] & [-0.3,5.2,2.7] \\
\hline
\multirow{3}{*}{\texttt{PineTree}} &\multirow{3}{*}{113368} & \texttt{PT-1} & [0.0,1.57,0.0] & [194.5, -0.1, 4.5]  \\
& & \texttt{PT-2} & [0.0,1.57,0.0] & [135.0, -0.1, 4.5] \\
& & \texttt{PT-3} & [0.0,0.0,0.0] & [80.0, 12.0, 40.0] \\
\hline
\multirow{3}{*}{\texttt{OSM}} &\multirow{2}{*}{144336} & \texttt{OSM-1} & [0.0,1.57,0.0] & [160.0,-0.1,20.0] \\
& &\texttt{OSM-2} & [0.0,1.57,0.0] & [160.0,-0.1,4.5]  \\
& &\texttt{OSM-3} & [0.0,-1.57,0.0] & [-160.0,0.0,10.0]  \\
\hline
\multirow{2}{*}{\texttt{Airport}} &\multirow{2}{*}{878217} & \texttt{AP-1} & [-1.71, -1.12, -1.73] & [-2914.7, 676.5, -97.5] \\
& & \texttt{AP-2} & [-1.36, -1.50, -1.36] & [-2654.9, 122.7, 26.7]  \\
\hline
\texttt{Garden} &524407 & \texttt{GD-1} & [-0.72, 0.31, 2.99] & [1.7, 3.7, 1.4]  \\
\hline
\texttt{Stump} &751333 & \texttt{ST-1} & [1.96, 0.10, 0.09] & [0.6, -1.8,-2.6] \\
\hline
\end{tabular}
\end{table}

% Let $\mathsf{PC}$ denote the set of pixels in the image, and each pixel's color is a 3-dimensional RGB vector. 
We define two metrics to quantify the precision of abstract images: (1) Mean Pixel Gap ($\mpg$) 
\[
\mpg = \frac{1}{\mathsf{WH}}\sum_{j\in \mathsf{WH}}\Vert\mathsf{\overline{pc_j}}- \mathsf{\underline{pc_j}}\Vert_2
\]
computes the average Euclidean distance across the RGB channels between the upper and lower bound. This gives an overall measure of bound tightness across the entire image. (2) Max Pixel Gap ($\xpg$)
\[
\xpg = \max_{j\in\mathsf{WH}}\Vert\mathsf{\overline{pc_j}}- \mathsf{\underline{pc_j}}\Vert_2
\]
measures the largest pixel-wise $L_2$ norm indicating the worst-case losses in any single pixel's bound. 
A smaller $\mpg$ or $\xpg$ value corresponds to more precise abstract rendering.

% Let $\mathsf{PC}$ denote the set of pixels in the image, and each pixel's color is a 3-dimensional RGB vector. 
% We define two metrics to quantify the tightness of the bounding images $\underline{\mathsf{PC}}$ and $\overline{\mathsf{PC}}$. 1) Mean Pixel Gap ($\mpg$) 
% \[
% \mpg(\mathsf{\underline{PC}}, \mathsf{\overline{PC}}) = \frac{1}{|\mathsf{PC}|}\sum_{\mathsf{pc}\in \mathsf{PC}}||\mathsf{\overline{pc}}- \mathsf{\underline{pc}}||_2
% \]
% computes the average euclidean distance across the RGB channels between the upper and lower bound, giving an overall measure of bound tightness across the entire image. 2) Max Pixel Gap ($\xpg$)
% \[
% \xpg(\mathsf{\underline{PC}}, \mathsf{\overline{PC}}) = \max_{\mathsf{pc}\in \mathsf{PC}}||\mathsf{\overline{pc}}- \mathsf{\underline{pc}}||_2
% \]
% measures the largest pixel-wise $L_2$ normm indicating the worst-case losseness in any single pixel's bound. 
% A smaller $\mpg$ or $\xpg$ value corrresponds to tighter bounds on the final rendered image.

% All experiments were conducted on a desktop PC equipped with an AMD Ryzen 7 5800X CPU, 32GB of RAM, and an NVIDIA RTX3090 GPU.

\subsection{$\absrend$ is Sound}
\label{sec:exp:sound}
% We applied $\absrend$ to scenes in Table~\ref{tab:scenes} under different camera poses and perturbations. In this part of the experiment, the input to $\absrend$ is defined as a set of cameras ${\bf C}=B_{\mathsf{\epsilon_t}, \mathsf{\epsilon_R}}(\mathsf{C}_0)$ where $\mathsf{\epsilon_t}$ and $\mathsf{\epsilon_R}$ are the radius of perturbation in camera position and orientation and $C_0$ is the nominal camera pose defined in each test cases. We recorded two metrics, $\mpg$ and $\xpg$, for our method and compared them against empirical estimates obtained by randomly sampling poses in ${\bf C}$ the specified perturbation. Different scenes were rendered at different image resolutions.

We applied $\absrend$ to the scenes in Table~\ref{tab:scenes} under various camera poses and perturbations. In these experiments, the input to $\absrend$ is a set of cameras ${\bf C}=B_{\mathsf{\epsilon_t}, \mathsf{\epsilon_R}}(\mathsf{C}_0)$, where $\mathsf{\epsilon_t}$ and $\mathsf{\epsilon_R}$ denote the perturbation radii for position and orientation, and $C_0$ is the nominal pose. We recorded two metrics, $\mpg$ and $\xpg$, and compared them against empirical estimates obtained by randomly sampling poses from ${\bf C}$. Different scenes were rendered at varying resolutions.
For some test cases, we further partitioned ${\bf C}$ into smaller subsets. We then computed bounds on the pixel colors for each subset and took their union. This practice improves the overall bound as the relaxation becomes tighter in each subset. 
Table~\ref{tab:scenes} summarizes the results from these experiments.

% To further show case performance of our method, we applied it to a wide range of scenarios with different camera poses and input perturbation. 

\begin{table}
\vspace{-0.7cm}
\caption{$\absrend$ scenes. Test case \(\mathsf{Tc}\), position perturbation \(\mathsf{\epsilon_t}\), orientation perturbation \(\mathsf{\epsilon_R}\), the number of partitions \(\mathsf{\#part}\), resolution \(\mathsf{res}\), runtime \(\mathsf{Rt}\), and metrics \(\mpg\) and \(\xpg\) under our method and via empirical sampling. Scalar \(\mathsf{\epsilon_t}\) and \(\mathsf{\epsilon_R}\) perturb all dimensions while vectors perturb only specific ones. }
\centering
\begin{tabular}{|l|l|l|l|l|l|l|l|l|l|}
\hline
% \multicolumn{3} & \multicolumn{}{}{}
\multicolumn{5}{|c|}{ } & \multicolumn{3}{|c|}{Ours}  & \multicolumn{2}{|c|}{Empirical}\\
\hline 
$\mathsf{Tc}$ & $\mathsf{\epsilon_t}$ &$\mathsf{\epsilon_R}$ &$\mathsf{\#part}$ & $\mathsf{res}$ & $\mathsf{Rt}$ & $\mpg$& $\xpg$ & $\mpg$& $\xpg$ \\
\hline
\texttt{PT-3} &  [2,0,0]& N/A&10&$72\times72$ & 246s & 0.27 & 1.73& 0.06 & 1.37\\ % PineTree8 R1
\texttt{PT-3} & [10,0,0]& N/A&100&$72\times 72$ & 48min & 0.47 & 1.73& 0.18 & 1.37\\ % PineTree8_1 R2
\texttt{PT-3} & [10,0,0]& N/A&200&$72\times72$ & 82min & 0.41 & 1.73& 0.18 & 1.37\\ % PineTree8\_1 R10
% \hline
\texttt{BD-1} & $0.1$ &N/A& 20 &$80\times 80$ &45min& 0.85 & 1.73 & 0.31 & 1.69 \\ % dozer1 R3
\texttt{BD-1} & $0.2$ &N/A& 40 &$80\times 80$ & 74min & 1.01 & 1.73 & 0.67 & 1.67 \\ % dozer1 R11
\texttt{BD-1} &N/A & [0,0,0.1] & 10 & $80\times 80$ & 22min & 0.51 &1.73& 0.22 & 1.57\\ % dozer2 R4
\texttt{BD-1} &N/A & [0,0,0.3] & 60 & $80\times 80$ &  138min & 0.64 &1.73& 0.47 & 1.67\\ % dozer2 R9
% \hline  
\texttt{AP-1} & $0.5$ &N/A& 1 & $160\times 160$ & 20min & 0.64&  1.72 & 0.07 &1.27\\ % airport1 R5
\texttt{AP-2} &N/A & $[0.001,0,0]$ & 1 & $160\times 160$ & 20min & 0.40 & 1.64 & 0.02 & 0.62\\ % Do another one with camera translation % airport2 R6
% \hline 
\texttt{GD-1} & $[0.05,0.0,0.0]$ &N/A& 1 & $100\times 100$ & 306s& 1.10 & 1.73 & 0.08 &0.83\\ % garden1 R7
% \hline 
\texttt{ST-1} & $0.002$ &N/A& 1 & $200\times 200$ & 10min & 0.12 & 1.53 & 0.01 &0.41\\ % stump2 R8
\texttt{OSM-3} & 0.01 &0.001& 1 & $48\times 48$ & 143s & 0.69 & 1.73 & 0.04 &1.01\\ % osm5 R12
\texttt{OSM-3} & 0.02 &0.002& 1 & $48\times48$ & 145s & 1.19 & 1.73 & 0.07 &1.20\\ % osm5 R13
\hline
\end{tabular}
\label{tab:scenes}
\vspace{-0.1cm}
\end{table}

First, pixel-color bounds produced by $\absrend$ are indeed sound. This is not surprising given Theorem~\ref{thm:all}, but given a good sanity check for our code. In all cases, the intervals computed strictly contain the empirically observed range of pixel colors from random sampling, confirming that our over-approximation is indeed conservative.

% Our first observation is that the pixel-color bounds produced by our method are sound. 
% $\absrend$ handles diverse 3D Gaussian scenes, from single-object environments (e.g., \texttt{BullDozer}) or synthetic ones (\texttt{PineTree} and \texttt{OSM}) to larger-scale realistic scenes with hundreds of thousands of Gaussians (\texttt{Airport}, \texttt{Garden}, and \texttt{Stump}). Even at moderate resolutions of up to $200\times200$, the runtime remains reasonable. In addition, we can accommodate both positional $(x,y,z)$ and angular (roll, pitch, yaw) perturbations in the camera pose.

$\absrend$ handles a variety of 3D Gaussian scenes—from single-object (e.g., \texttt{BullDozer}) and synthetic (\texttt{PineTree}, \texttt{OSM}) to large-scale realistic scenes with hundreds of thousands of Gaussians (\texttt{Airport}, \texttt{Garden}, \texttt{Stump})—with reasonable runtime even at $200\times200$ resolution. Moreover, it accommodates both positional $(x,y,z)$ and angular (roll, pitch, yaw) camera perturbations.

Figure~\ref{fig:intro} visualizes the computed lower and upper pixel-color bounds for row 4 in Table~\ref{tab:scenes}, corresponding to a 10 cm horizontal shift in the camera position. Every pixel color from renderings within this range falls within our bounds. Our upper bound is brighter and the lower bound darker compared to the empirical bounds. The over-approximation is largely due to repeated $\mul$ operations in $\pcind$.

% An example visualization of our computed lower and upper pixel-color bounds for row 4 in Table~\ref{tab:scenes} is shown in Fig.~\ref{fig:intro}, where the camera's position ranges from [-0.63,0.08,3.12] to [-0.73,0.08,3.12]. All pixel colors that occur when rendering within this range lie within our bounding images. Compared to the empirical bounds (also shown), our upper bound appears brighter and our lower bound darker, reflecting the inherent over-approximation in our method. Most of the conservativeness stems from the repeated $\mul$ in $\pcind$.

As the size of ${\bf C}$ increases, the pixel-color bounds naturally slacken. For example, in rows 12–13 of Table~\ref{tab:scenes}, a larger perturbation range yields higher $\mpg$ and $\xpg$ values and a wider gap from empirical estimates. Conversely, partitioning the input into finer subsets tightens the bounds: in rows 2–3 (\texttt{PT-3}), increasing partitions from 100 to 200 noticeably reduces $\mpg$, and a similar trend is observed in rows 6–7 for yaw perturbations.
%where increasing the angle from 0.1 to 0.3 rad maintains the closeness between our method and sampling-based bounds.

% As the size of ${\bf C}$ increases, the pixel-color bounds naturally become looser. For instance, in rows 12 and 13 of Table~\ref{tab:scenes}, increasing the perturbation range leads to higher $\mpg$ and $\xpg$ values and a larger gap between our bounds and the empirical estimates. Conversely, partitioning the input region into finer subregions helps tighten the bounds. For example, in rows 2 and 3 (\texttt{PT-3}), increasing from 100 to 200 partitions noticeably reduces the $\mpg$, bringing our over-approximation closer to the empirical results. A similar trend appears in rows 6 and 7 for yaw perturbations, where increasing the angle from 0.1 rad to 0.3 rad does not substantially degrade the difference between our method and sampling-based bounds,thanks again to finer partitioning.

Overall, these experiments confirm that our method yields sound over approximations for $\gsplat$ and can flexibly trade off runtime for tighter bounds by adjusting input partition granularity.

\subsection{Comparing with Image Abstraction Method of~\cite{10745846}}
\label{sec:exp:comp}
We compare $\absrend$ with the baseline approach from \cite{10745846}, which computes interval images from triangular meshes under a set of camera positions. 
We selected four test cases from the scenes in \cite{10745846} and introduced various sets of cameras ${\bf C}$. We recorded two metrics, $\mpg$ and $\xpg$, under both our approach and the baseline. To match with ~\cite{10745846}, all images are rendered at $49\times 49$ pixels. Table~\ref{tab:comparison} summarizes our results.

\begin{table}[t]
\caption{\small Comparison of~\cite{10745846} and our method. Test case~$\mathsf{Tc}$, runtime $\mathsf{Rt}$ for our method and baseline and the two metrics measured for our method and baseline.} 
\centering
\begin{tabular}{|l|l|l|l|l|l|l|l|l|}
\hline
% \multicolumn{3} & \multicolumn{}{}{}
\multicolumn{3}{|c|}{ } & \multicolumn{3}{|c|}{Ours} & \multicolumn{3}{|c|}{Baseline~\cite{10745846}}\\ % & \multicolumn{2}{|c|}{Empirical}\\
\hline 
$\mathsf{Tc}$ & $\mathsf{\epsilon_t}$  & $\mathsf{\epsilon_R}$ &$\mathsf{Rt}$ & $\mpg$ & $\xpg$ & $\mathsf{Rt}$ & $\mpg$ & $\xpg$\\ % & $\mpg$ & $\xpg$ \\
\hline
\texttt{PT-1} &  0.01  & N/A& 125s & 0.07 & 0.74 & 222s & 0.05 & 1.25\\ %& 1.29 & 71\\ %PineTree4 
\texttt{PT-1} & 0.02  & N/A& 124s & 0.24 & 1.54 & 251s& 0.05 & 1.25\\ %  & 2.50 & 121\\ %PineTree4
\texttt{PT-1} & N/A  & $[0,0,0.01]$& 67s & 0.33 & 1.72 & N/A & N/A & N/A\\% & 5.38 & 274\\ %PineTree6
\texttt{PT-2} &  0.01& N/A& 84s & 0.07 & 1.36  & 110s & 0.03 & 1.62\\% & 0.61 &101  \\ %PineTree5
\texttt{PT-2} & 0.02 & N/A& 88s & 0.14 & 1.60 & 120s & 0.03 & 1.62\\% & 1.16 &146\\ %PineTree5
\texttt{PT-2} & N/A &$[0,0,0.01]$ & 35s & 0.08 &  1.52 & N/A & N/A & N/A\\% & ?? &??\\ %PineTree7
\hline
\texttt{OSM-1} & 0.01 & N/A& 105s & 0.11 & 0.66 & 26min & 0.11 & 1.59\\% & ?? &??\\ %OSM2
\texttt{OSM-1} & 0.03 & N/A& 111s & 0.32 & 1.54 & 29min & 0.14 & 1.65 \\%& ?? &??\\ %OSM2
\texttt{OSM-1} & N/A & $[0,0.001,0]$& 53s & 0.51 & 1.70 & N/A & N/A & N/A\\% & ?? &??\\ %OSM4
\texttt{OSM-2} & 0.01 & N/A& 108s & 0.11 & 0.84 & 26min & 0.10 & 1.60\\% & ?? &??\\ %OSM1
\texttt{OSM-2} & 0.03 & N/A& 104s & 0.30 & 1.63 & 29min & 0.12 & 1.65\\% & ?? &??\\ %OSM1
\texttt{OSM-2} & N/A& $[0,0.001,0]$ & 59s & 0.51 & 1.73 & N/A & N/A & N/A\\% & ?? &??\\ %OSM3
\hline
\end{tabular}
\label{tab:comparison}
\end{table}

Overall, our algorithm achieves faster runtime than the baseline in most tests. For \texttt{PT-1}, our method is about 2$\times$ faster, while in the \texttt{OSM} cases, it is roughly 14$\times$ faster. In terms of $\xpg$, our approach often yields tighter worst-case bounds, especially with small input perturbations, indicating a more precise upper bound on pixel colors. However, as the perturbation grows, our $\mpg$ can become larger (i.e., less tight) compared to the baseline. However, because our method runs faster, we can split ${\bf C}$ into smaller subsets to refine these bounds when needed, as discussed in Section~\ref{sec:exp:sound}.
Lastly, our method also supports perturbations in camera orientation, whereas the baseline does not handle those. This flexibility allows us to cover a broader range of real-world camera variations while retaining reasonable performance. Note that the rendering algorithm discussed in~\cite{10745846} could potentially be abstracted by our method since it can be written using operations in Table~\ref{tab:op}.

\subsection{Scalability of $\absrend$}

We evaluate the scalability of $\absrend$ by varying the tile size ($\mathsf{TS}$), batch size ($\mathsf{BS}$), and image resolution ($\mathsf{res}$) while keeping the same scene and camera perturbations (see Table~\ref{tab1}). The table reports the number of Gaussians processed and peak GPU memory usage.

Increasing the tile size from $4\times4$ to $8\times8$ reduces computation time but increases memory usage roughly quadratically with the tile dimension. Similarly, as $\mathsf{BS}$ increases, runtime drops while memory consumption grows proportionally; however, beyond a certain batch size—roughly the number of Gaussians per tile—the performance gains diminish, and too small a batch size negates memory savings due to overhead in $\pcind$. Additionally, processing more Gaussians and higher resolutions linearly increase both runtime and memory demands.

Overall, these results highlight a clear trade-off between speed and memory usage. By adjusting $\mathsf{TS}$, $\mathsf{BS}$, and $\mathsf{res}$, users can balance performance and resource requirements to best suit their hardware.

\begin{table}[t]
\caption{\small Scalability result for \texttt{PT-1} Under Varying Tile and Batch Sizes.}\label{tab1}
\centering
\begin{tabular}{|l|l|l|r|r|r|r|r|}
% \multicolumn{3} & \multicolumn{}{}{}
\hline 
$\mathsf{Tc}$ & $\mathsf{res}$ & $\mathsf{TS}$ & $\mathsf{BS}$ & $\mathsf{\# Gauss}$ & $\mathsf{Rt}$ & GPU Mem (MB) \\ %\# Gauss Tile & 
\hline
\texttt{PT-1} & $48\times 48$ & $4\times 4$ & 10000 &317243& 283s & 2166 \\ % 14971 & 
\texttt{PT-1} & $48\times 48$ & $8\times 8$ & 2000  &189054& 256s & 9968 \\ % 21052 & 
\texttt{PT-1} & $48\times 48$ & $8\times 8$ & 4000  &189054& 162s& 10838\\ % 21052 & 
\texttt{PT-1} & $48\times 48$ & $8\times 8$ & 10000 &189054& 110s & 14856 \\ % 21052 & 
\texttt{PT-1} & $48\times 48$ & $8\times 8$ & 30000 &189054& 91s & 21452 \\ % 21052 & 
\texttt{PT-1} & $48\times 48$ & $8\times 8$ & 30000 &216671& 107s & 19384 \\ % 21052 & 
\texttt{PT-1} & $96\times 96$ & $8\times 8$ & 30000 &232509& 256s & 12694 \\ % 21052 & 
\hline
\end{tabular}
\vspace{-0.1cm}
\end{table}

\subsection{Long Spikey Guassians Cause Corase Analysis}
An interesting observation from our experiments is that long, narrow Gaussians in a scene can lead to significantly more conservative bounds in our method. Figure~\ref{fig:thin_gauss} illustrates this phenomenon using the same test case (\texttt{BD-1}) under a small yaw perturbation of -0.005-0 rad.

In the left images, the bounds are visibly too loose because the scene contains many “spiky” Gaussians with near-singular covariance matrices. Even small perturbations in such covariances—when inverting them—can cause very large ranges in the conic form as discussed in Section~\ref{sec:inv}. For example, consider one Gaussian in the image coordinate frame with tight covariance bounds:
\[
\small
\underline{\mathsf{Cov}}=\begin{bmatrix}
    0.3158& 0.3203\\
    0.3203 & 0.3500
\end{bmatrix},
\overline{\mathsf{Cov}}=\begin{bmatrix}
    0.3196 & 0.3226 \\
    0.3226 & 0.3508
\end{bmatrix}
\]
After inversion, this becomes:
\[
\small
\underline{\mathsf{Conic}}=\begin{bmatrix}
    37.9340 & -47.8139\\
    -47.8381 &  34.5632
\end{bmatrix},
\overline{\mathsf{Conic}}=\begin{bmatrix}
    51.9298 & -34.7045\\
    -34.6626 &  46.9024
\end{bmatrix}
\]
Such drastic changes in the inverse bleeds into large values from the Gaussians, overestimated effective opacities $\mathsf{a}$, and ultimately, loose pixel-color bounds.

To mitigate this issue, we retrained the scene while penalizing thin Gaussians to avoid near-singular covariances. The resulting bounds (shown in the right images of Fig.~\ref{fig:thin_gauss}) are significantly tighter. Thus, the structure of the 3D Gaussian scene itself can substantially impact the precision of our analysis, and regularizing or penalizing long, thin Gaussians provides a practical way to reduce over-approximation.

\begin{figure}[t]
    \centering
    \includegraphics[width=0.24\linewidth,height=2.1cm]{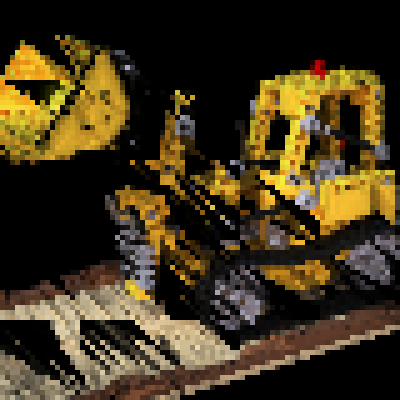}
    \includegraphics[width=0.24\linewidth,height=2.1cm]{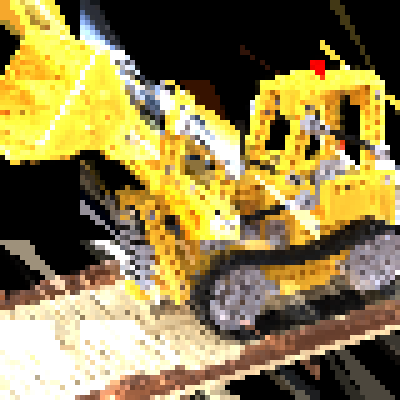}
    \includegraphics[width=0.24\linewidth,height=2.1cm]{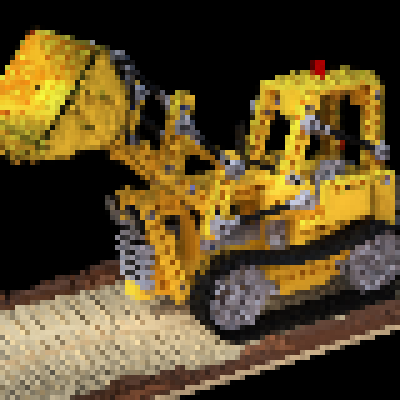}
    \includegraphics[width=0.24\linewidth,height=2.1cm]{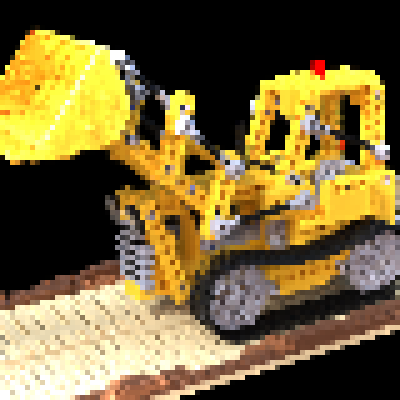}
    \caption{\small {\em Left $\&$ center left:} Lower and upper pixel-color bounds before penalizing thin Gaussians. {\em Center right  $\&$ right:} Bounds after penalizing thin Gaussians.}
    \label{fig:thin_gauss}
\end{figure}

\subsection{Abstract Rendering Sets of Scene}
While prior experiments focused on camera pose uncertainty, $\absrend$ also supports propagating sets of 3D Gaussian scene parameters (e.g., color, mean position, opacity). This capability enables analyzing environmental variations or dynamic scenes represented as evolving Gaussians. We validate this with three experiments on the \texttt{PT-1}, fixing the camera pose and perturbing specific Gaussian attributes.

% \renewcommand{\thesubfigure}{\thefigure(\alph{subfigure})}
% \DeclareCaptionLabelFormat{custom}{Fig.~#2}
% \captionsetup[subfigure]{labelformat=custom, labelsep=none}
\begin{figure}[htbp]
    \centering
    % Column 1
    \begin{minipage}[t]{0.3\textwidth}
        \centering
        \begin{subfigure}[b]{\textwidth}
            \centering
            \includegraphics[width=0.80\linewidth,height=1.7cm]{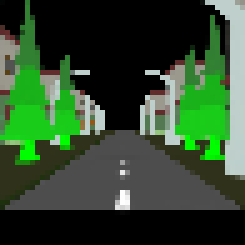}
            % \caption{Image 1a}
            % \label{fig:1a}
        \end{subfigure}
        
        % \vspace{0.5cm} % Space between the two images

        \begin{subfigure}[b]{\textwidth}
            \centering
            \includegraphics[width=0.80\linewidth,height=1.7cm]{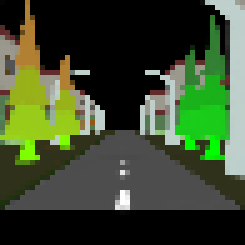}
            % \caption{\small Lower and upper bound under Gaussian color perturbation.}
        \end{subfigure}
    \end{minipage}
    \hfill
    % Column 2
    \begin{minipage}[t]{0.3\textwidth}
        \centering
        \begin{subfigure}[b]{\textwidth}
            \centering
            \includegraphics[width=0.80\linewidth,height=1.7cm]{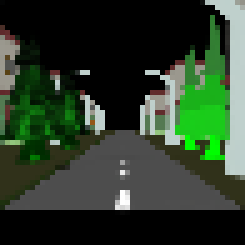}
            % \caption{Image 2a}
            % \label{fig:2a}
        \end{subfigure}
        
        % \vspace{0.5cm}

        \begin{subfigure}[b]{\textwidth}
            \centering
            \includegraphics[width=0.80\linewidth,height=1.7cm]{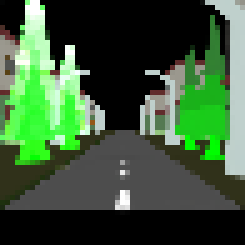}
            % \caption{\small Lower and upper bound under Gaussian mean perturbation.}
        \end{subfigure}
    \end{minipage}
    \hfill
    % Column 3
    \begin{minipage}[t]{0.3\textwidth}
        \centering
        \begin{subfigure}[b]{\textwidth}
            \centering
            \includegraphics[width=0.80\linewidth,height=1.7cm]{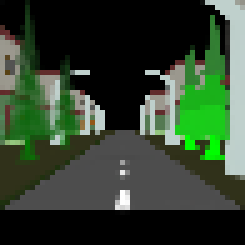}
            % \caption{Image 3a}
            % \label{fig:3a}
        \end{subfigure}
        \begin{subfigure}[b]{\textwidth}
            \centering
            \includegraphics[width=0.80\linewidth,height=1.7cm]{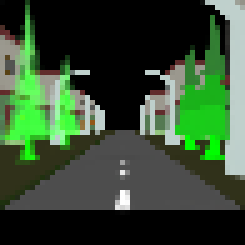}
            % \caption{\small Lower and upper bound under Gaussian opacity perturbation.}
        \end{subfigure}
    \end{minipage}
    % \label{fig:ptb_color}
    % \label{fig:ptb_mean}
    % \label{fig:ptb_opacity}
    \caption{\small Lower (Top) and upper (Bottom) bound under Gaussian color (Left), mean (Mid) and opacity (Right) perturbation.}
    \label{fig:ptb_scene}
    % \caption{Grouped Images in a 2×3 Grid}
    % \label{fig:grouped_images}
\end{figure}

In the first experiment, we perturb the red channel of the Gaussians representing the two trees by 0–0.5. The resulting pixel-color bounds (Fig.\ref{fig:ptb_scene} Left) fully encompass the empirical range, confirming that our method captures chromatic uncertainty. In the second experiment, a 1 m perturbation along the +y-axis is applied to the tree Gaussians' mean positions; the bounds (Fig.\ref{fig:ptb_scene} Mid) capture both spatial displacement and occlusion effects, demonstrating that $\absrend$ effectively propagates geometric uncertainty. Finally, by scaling the original opacities by 0.1 (capping them) and adding a +0.1 perturbation, we test transparency variations. The resulting bounds (Fig.~\ref{fig:ptb_scene} Right) remain sound, though somewhat looser due to multiplicative interactions in $\pcind$.

% In the first experiment, we apply a 0-0.5 perturbation to the red channel of all Gaussians representing the two trees on the left side of the scene. The computed pixel-color bounds (Fig~\ref{fig:ptb_color}) fully encompass the empirical range of color variations, confirming that our method captures chromatic uncertainty in scene elements.

% In the second experiment, we introduce a 1m perturbation along the +y-axis (horizontal direction) to the mean positions of the same tree Gaussians. The resulting bounds (Figure~\ref{fig:ptb_mean}) account for both geometric displacement and occlusion changes, demonstrating that $\absrend$ propagates spatial uncertainty into pixel-space effects.

% Finally, we first scale the original opacities of the tree Gaussians by a factor of $0.1$ (capping their maximum opacity at $0.1$) and then introduce a $+0.1$ perturbation to the scaled values. This tests $\absrend$'s ability to handle transparency variations, which directly influence blending dynamics. The bounds (Figure~\ref{fig:ptb_opacity}) remain sound, but loosen  due to multiplicative interactions in $\pcind$.

These experiments highlight the flexibility of $\absrend$: by abstracting Gaussian parameters, we can model environmental dynamics such as object motion, material fading, or scene evolution. This capability aligns with recent work on dynamic 3D Gaussians~\cite{Wu_2024_CVPR,yang2023gs4d,xu2024splatfactownerfstudioimplementationgaussian} and underscores the potential for robustness analysis in vision systems operating in non-static environments. 

% In the first experiment, we add 0.5 perturbation to the red channel to the color of the gaussians representing the two trees in the left of the figure. Fig.~\ref{} shows the effect of adding perturbation to the color of gaussians. From the result we can see that the the bound for the pixel color we computed properly cover the color change of the two trees caused by the perturbation. 

% In the second experiment, we add 1m perturbation to the positive y dimension of the means of the gaussians of those two trees. The result is shown in Fig.~\ref{}. From the result, we can see that the bound we computed properly capture the movement of the tree caused by the perturbation in the scenario. 

% In the last experiment, we add ?? perturabtion to the opacity of gaussians for the two trees. The result is shown in Fig.~\ref{}. From the result, we can see the bound is correct. 

% Although these experiments are preliminary, and we are only adding simple perturbation to the parameters of gaussian scenes. However, this have huge potential as it can be helpful to analyze environmental variation or dynamical scene~\cite{} described by gaussian. 

\section{Discussion and Future Work}
We presented the first method for rigorously bounding the output of 3D Gaussian splatting under camera pose uncertainty, enabling formal analysis of vision-based systems operating in dynamic environments. By reformulating depth-sorting as an index-free blending process and bounding matrix inverses via Taylor expansions, our approach avoids combinatorial over-approximation while maintaining computational tractability. Experiments demonstrate scalability to scenes with over 750k Gaussians. 

%While our method provides sound over-approximations, its precision is inherently limited by multiplicative error propagation in blending. 
%How we currently representing the output bounds may also fail to capture spatial correlations in uncertainty across pixels. 
% \sayan{SVD could be used instead of taylor for inverse} \chenxi{is it possible to do SVD for a set of matrices?}
% the interval-based abstraction of pixel colors. These intervals may also fail to capture spatial correlations in uncertainty across pixels.

Potential directions include refining the blending process to reduce error accumulation, exploring higher-order bounds or symbolic representations for pixel correlations, and extending the framework to closed-loop system verification. Adapting our approach to other rendering paradigms (e.g., NeRF, mesh-based rendering) could further unify uncertainty-aware analysis across computer vision pipelines. It is worth further investigating abstract rendering for scene variations.

% The paper present the first method for rigorously bounding the output of 3D gaussian scenes under camera pose uncertainty (position and orientation perturbations), enabling formal analysis of vision-based systems operating in dynamics environments. 

% Limitations, 1) over-approximation caused by cumulative multiplication in $\pcind$. 2) Different representation of abstract image other than lower and upper bound of pixel values. 

% Future works, improve on the two limitations, wide application including closed loop system verification, also can be interesting to see if extension of our algorithm can be applied to different rendering algorithm including nerf or triangular mesh rendering. 

% 1. Talking about how the scalability change according to the number of gaussians \\
% 2. Talking about how the scalability change according to the number of pixels in the image \\
% 3. Talking about how choosing different tile size influence scalability \\
% 4. Talking about how choosing different batchsize influence scalability  

%\subsubsection{Acknowledgements} Please place your acknowledgments at
%the end of the paper, preceded by an unnumbered run-in heading (i.e.
%3rd-level heading).

%
% ---- Bibliography ----
%
% BibTeX users should specify bibliography style 'splncs04'.
% References will then be sorted and formatted in the correct style.
%
% \bibliographystyle{splncs04}
% \bibliography{mybibliography}
%
\bibliographystyle{splncs04}
\bibliography{egbib}

\appendix

\section{Appendix}

\subsection{Proofs of Lemmas used for Abstract Rendering}
\label{app:proof}

\begin{proposition}
% \label{prop:cont-lin-approx}
Any continuous function is linearly over-approximable.
\end{proposition}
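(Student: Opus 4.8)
The plan is to derive the statement from the Heine--Cantor theorem: on a compact domain a continuous function is uniformly continuous, and uniform continuity lets us sandwich $f$ between two \emph{piecewise constant} maps whose gap is as small as we wish. Since a piecewise constant map is a degenerate piecewise linear map — a finite collection of constant relations in the sense of Section~\ref{sec:lin_bg} — this already meets the requirements of Definition~\ref{def:lin-oa}, so no genuinely affine (non-constant) pieces are ever needed.

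Concretely, fix a continuous $f:X\to\reals^m$, a compact set $B\subseteq X$, and $\varepsilon>0$. First I would restrict $f$ to $B$; being continuous on a compact set it is uniformly continuous, so there is $\delta>0$ such that for all $x,y\in B$ with $\Vert x-y\Vert_\infty<\delta$ we have $\Vert f(x)-f(y)\Vert_\infty<\varepsilon/4$. (Working with the $\infty$-norm is convenient because the condition $|u_B(x)-\ell_B(x)|<\varepsilon$ in Definition~\ref{def:lin-oa} is read component-wise.) Next I would enclose $B$ in a hyperrectangle $\mathcal R\subseteq\reals^n$ and cut $\mathcal R$ into finitely many half-open grid cells $C_1,\dots,C_N$, each of $\ell_\infty$-diameter less than $\delta$, so that $\{C_k\}$ genuinely partitions $\mathcal R$. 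Keeping only the cells that meet $B$ and choosing a sample point $x_k\in C_k\cap B$ for each, I would define, for $x\in B\cap C_k$,
\[
\ell_B(x)=f(x_k)-\tfrac{\varepsilon}{4}\,\mathbf 1,\qquad u_B(x)=f(x_k)+\tfrac{\varepsilon}{4}\,\mathbf 1,
\]
with $\mathbf 1=(1,\dots,1)^\top\in\reals^m$. This map is piecewise constant on $B$, hence piecewise linear.

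Then I would verify the two conditions. For $x\in B\cap C_k$ both $x$ and $x_k$ lie in $B$ and in $C_k$, so $\Vert x-x_k\Vert_\infty<\delta$, whence $|f_i(x)-f_i(x_k)|\le\Vert f(x)-f(x_k)\Vert_\infty<\varepsilon/4$ for every coordinate $i$. Therefore $\ell_B(x)\le f(x)\le u_B(x)$ component-wise, and $u_B(x)-\ell_B(x)=(\varepsilon/2)\,\mathbf 1$, each component of which is strictly below $\varepsilon$. This establishes that $f$ is linearly over-approximable.

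The only place needing care — and the closest thing to an obstacle — is the bookkeeping around cell boundaries together with the fact that $B$ itself need not be a polytope. A clean fix is to use half-open cells so that $\{C_k\}$ partitions $\mathcal R$ and every point of $B$ lands in exactly one cell, or equivalently to first define $\ell_B,u_B$ on all of $\mathcal R$ as the grid-indexed piecewise constant functions and then restrict to $B$, noting that the restriction of a piecewise linear map stays piecewise linear. One should also state explicitly that a constant map counts as linear, so a piecewise constant map counts as piecewise linear, consistent with the ``constant relation'' terminology of Section~\ref{sec:lin_bg}. Beyond this the argument is entirely routine: the whole mathematical content is uniform continuity plus a finite grid.
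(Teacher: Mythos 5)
Your proof is correct, and it takes a genuinely different route from the paper's. The paper fixes an arbitrary finite slope $k$ and uses only pointwise continuity: around each $x_0$ it sandwiches $f$ between the two parallel affine maps $k(x-x_0)+f(x_0)\pm\varepsilon/3$ on a small neighborhood, then invokes compactness to extract a finite subcover and glues the local bounds by taking the pointwise $\min$ of the upper bounds and $\max$ of the lower bounds over the neighborhoods containing $x$. You instead invoke Heine--Cantor once to get uniform continuity, lay down a $\delta$-fine grid of disjoint half-open cells, and use piecewise \emph{constant} bounds $f(x_k)\pm\tfrac{\varepsilon}{4}\mathbf{1}$. Your version buys a cleaner gluing step: the disjoint cells make the ``piecewise'' structure explicit and avoid the paper's $x$-dependent index set over overlapping neighborhoods, where one must still check that the $\min$/$\max$ of affine pieces stays piecewise linear and keeps the gap below $\varepsilon$. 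The paper's version, by carrying the free slope $k$, is closer in spirit to the slope-matched secant/tangent relaxations that $\crown$ actually computes, whereas your zero-slope bounds amount to interval enclosure on a fine grid --- perfectly sufficient for the existence claim, though looser as a practical template. Your appeal to the ``constant relation'' remark in Section~\ref{sec:lin_bg} correctly justifies that piecewise constant counts as piecewise linear, so no gap remains.
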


%\textbf{Proof of Proposition~\ref{prop:cont-lin-approx}}

\begin{proof}
%\xiangru{Use $\varepsilon$ to keep consistent with Definition~\ref{def:lin-oa}?}
We present a simple proof for scalar $f$ and this can be extended to higher dimensions. 
We fix $x_0 \in \reals$ and  a finite slope $k\in\reals$. Since $f(x)-k(x-x_0)$ is continuous, for any $\varepsilon>0$, there must exist an open neighborhood $U(x_0)$ around $x_0$, such that $\forall x\in R$, \begin{align}
   |f(x)-f(x_0)-k(x-x_0)|<\frac{\varepsilon}{3}.
\end{align} 
Equivalently,
\begin{align}
   f(x_0)+k(x-x_0)-\frac{\varepsilon}{3}<f(x)<f(x_0)+k(x-x_0)+\frac{\varepsilon}{3}.
\end{align}
We construct candidate  linear upper and lower bounds over $U$ as: 
%\sayan{Do you need the bounds to be strict? That's unusual.}
\begin{align}
    & l_U(x)=k(x-x_0)+f(x_0)-\frac{\varepsilon}{3},\quad
    u_U(x)=k(x-x_0)+f(x_0)+\frac{\varepsilon}{3}.
\end{align} 
We can check that these linear functions $l_U(x)$ and $u_U(x)$ are valid lower and upper bounds for $f(x)$ over $U$, because $\forall x\in U(x_0)$:
\begin{align}
    & f(x)-l_U(x)=f(x)-
    (k(x-x_0)+f(x_0)-\frac{\varepsilon}{3})
    >
    -\frac{\varepsilon}{3}+\frac{\varepsilon}{3}= 0,\\
    & f(x)-u_U(x)=f(x)-(k(x-x_0)+f(x_0)+\frac{\varepsilon}{3})<\frac{\varepsilon}{3}-\frac{\varepsilon}{3}= 0.
\end{align}
Additionally, difference between $l_U(x)$ and $u_U(x)$ is tightly bounded, as $\forall x\in U(x_0)$:
\begin{align*}
    |u_U(x)-l_U(x)|=(k(x-x_0)+f(x_0)+\frac{\varepsilon}{3})-(k(x-x_0)+f(x_0)-\frac{\varepsilon}{3})=\frac{2\varepsilon}{3}<\varepsilon.
\end{align*}
Now, for any compact set $B$, the collection $\mathcal{U} = \{U(x) \mid x\in B\}$ forms an open cover of $B$. By the definition of compactness, there exists a finite subcover of $B$, denoted by $\{U_i = U(x_i)\mid i = 1,\dots, s\}$. We can now define piecewise linear functions on $B$ as follows:
\begin{equation}
\begin{aligned}    
    u_B(x) &= \min_{j \in I}u_{U_j}(x),\\
    l_B(x) &= \max_{j \in I}l_{U_j}(x),\quad if\ x\in \{U_j\mid j \in I\}
\end{aligned}
\end{equation}
where $I$ is the index set of $U_i$ that cover $x$. Consequently, we have
\begin{equation}  
    u_B(x) =\min_{j \in I}u_{U_j}(x) \geq f(x) \geq \max_{j \in I}l_{U_j}(x) = l_B(x),
\end{equation}
and
\begin{equation}
    |u_B(x) - l_B(x)| \leq u_{U_j}(x) - l_{U_j}(x)\leq \varepsilon,\quad \forall j \in I.
\end{equation}
Thus, $f$ is linearly over-approximable.
% For any $x\in B$, suppose $x\in U_i$, we have
% \begin{equation}
%     |u_B(x) - l_B(x)| \leq u_{U_i}(x) - l_{U_i}(x) \leq \varepsilon.
% \end{equation}

%Therefore, $f$ is linearly over-approximable at $x_0$, completing this proof.
\qed
\end{proof}

\begin{lemma}
% \label{lem:mat-inv}
Given any non-singular matrix $\sf{X}$, the output of Algorithm~$\matinv$ $\langle\mathsf{lXinv},\mathsf{uXinv}\rangle$ satisfies that:
\[
    \mathsf{lXinv}\leq \inv(\sf{X})\leq \mathsf{uXinv}
\]
\end{lemma}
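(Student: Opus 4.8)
The plan is to establish the classical Neumann series bound for matrix inverses and then observe that the algorithm $\matinv$ is simply an arithmetic transcription of the truncated Neumann series together with its standard tail estimate. First I would set $E = I - \mathsf{X}\mathsf{X0}$ (this is the quantity $\mathsf{IXX0}$ computed on Line~\ref{lin:alg-matinv-1}) and invoke the assertion on Line~\ref{lin:alg-matinv-2} that $\norm(E) < 1$. Since $\mathsf{X}$ is non-singular, $\mathsf{X}^{-1} = \mathsf{X0}\,(\mathsf{X}\mathsf{X0})^{-1} = \mathsf{X0}\,(I - E)^{-1}$, and because $\|E\| < 1$ the Neumann series gives $(I-E)^{-1} = \sum_{i=0}^{\infty} E^i$, convergent in Frobenius norm. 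Hence $\mathsf{X}^{-1} = \sum_{i=0}^{\infty} \mathsf{X0}\,E^i$, whose partial sum up to $i = \mathsf{k}$ is exactly the matrix $\mathsf{Xp}$ assembled on Lines~\ref{lin:alg-matinv-3}--\ref{lin:alg-matinv-4}.

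Next I would bound the truncation error $\mathsf{X}^{-1} - \mathsf{Xp} = \sum_{i=\mathsf{k}+1}^{\infty} \mathsf{X0}\,E^i = \mathsf{X0}\,E^{\mathsf{k}+1}(I-E)^{-1}$. Taking Frobenius norms and using submultiplicativity of $\|\cdot\|$ together with the geometric estimate $\|(I-E)^{-1}\| \le \sum_{i=0}^\infty \|E\|^i = \frac{1}{1 - \|E\|}$ yields
\[
\bigl\| \mathsf{X}^{-1} - \mathsf{Xp} \bigr\| \;\le\; \frac{\|\mathsf{X0}\|\,\|E\|^{\mathsf{k}+1}}{1 - \|E\|},
\]
and the right-hand side is precisely the scalar $\mathsf{Eps}$ computed on Line~\ref{lin:alg-matinv-5}. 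From an elementwise bound by the Frobenius norm — for every entry, $|(\mathsf{X}^{-1})_{ij} - (\mathsf{Xp})_{ij}| \le \|\mathsf{X}^{-1} - \mathsf{Xp}\| \le \mathsf{Eps}$ — we conclude $\mathsf{Xp} - \mathsf{Eps} \le \inv(\mathsf{X}) \le \mathsf{Xp} + \mathsf{Eps}$ elementwise, which is exactly $\mathsf{lXinv} \le \inv(\mathsf{X}) \le \mathsf{uXinv}$ by Lines~\ref{lin:alg-matinv-6}--\ref{lin:alg-matinv-7}.

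A few routine points need care. I must check that $\mathsf{Xa}$ on Line~\ref{lin:alg-matinv-3} indeed computes $\mathsf{X0}\cdot E^i$ via the $\pow$ and $\mat$ operators, and that the index range $\{0,1,\dots,\mathsf{k}\}$ gives the stated partial sum (so $\mathsf{Xp} = \Sum(\mathsf{Xa}, \mathsf{k}+1)$ sums $\mathsf{k}+1$ terms). I should also note the direction of the identity $\mathsf{X}^{-1} = \mathsf{X0}(I-E)^{-1}$ depends on whether $E = I - \mathsf{X}\mathsf{X0}$ or $I - \mathsf{X0}\mathsf{X}$ is used; with $E = I - \mathsf{X}\mathsf{X0}$ one has $\mathsf{X}\mathsf{X0} = I - E$, so $(\mathsf{X}\mathsf{X0})^{-1} = (I-E)^{-1}$ and $\mathsf{X}^{-1} = \mathsf{X0}(\mathsf{X}\mathsf{X0})^{-1} = \mathsf{X0}(I-E)^{-1}$, as used above — this left/right bookkeeping is the one place an error could slip in, so I would state it explicitly. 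The main conceptual step is simply recognizing the algorithm as the Neumann-series construction; there is no real obstacle beyond carefully matching each algorithm line to its mathematical counterpart and invoking the submultiplicativity of the Frobenius norm (which holds, since $\|AB\| \le \|A\|\|B\|$ for the Frobenius norm).
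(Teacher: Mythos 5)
Your proof is correct and follows essentially the same route as the paper's: expand $\mathsf{X}^{-1}$ as a Neumann series around the reference, identify the truncation with $\mathsf{Xp}$, and bound the tail by the geometric series in the (submultiplicative) Frobenius norm to obtain $\mathsf{Eps}$. Working directly with $E=I-\mathsf{X}\mathsf{X0}$ rather than $\Delta\mathsf{X}\cdot\mathsf{X_{ref}^{-1}}$ is only a cosmetic difference (and in fact sidesteps a sign slip in the paper's write-up of the series), so nothing further is needed.
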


%\textbf{Proof of Lemma~\ref{lem:mat-inv}}

\begin{proof}
For any non-singular $n\times n$ matrices $\sf{X}$ and $\sf{X_{ref}}$, denote their difference as $\sf{\Delta X}=\sf{X}-\sf{X_{ref}}$. The $\sf{k^{th}}$ order Taylor Polynomial $\sf{Xp}$ and remainder $\sf{R}$ of matrix inverse of $\sf{X}$, estimated at $\sf{X_{ref}}$, can be written as follows:
\begin{align}
    & \label{eq:k-TM} \mathsf{Xp}=\mathsf{X_{ref}^{-1}}\sum_{\mathsf{i}=0}^{\mathsf{k}}(\mathsf{\Delta X}\cdot \mathsf{X_{ref}^{-1}})^\mathsf{i}\\
    & \label{eq:R-TM} \mathsf{R}=\mathsf{X_{ref}^{-1}}\sum_{\mathsf{i}=\mathsf{k}+1}^{\infty}(\mathsf{\Delta X}\cdot \mathsf{X_{ref}^{-1}})^\mathsf{i}
\end{align}
Assuming that $\Vert\mathsf{\Delta X}\cdot \mathsf{X_{ref}^{-1}}\Vert<1$, the remainder $\sf{R}$ can be bounded by:
\begin{align}
\label{eq:bound-R}
\begin{split}
    \mathsf{Eps}=\Vert\sf{R}\Vert
    & \leq \Vert\mathsf{X_{ref}^{-1}}\Vert\cdot \sum_{\mathsf{i}=\mathsf{k}+1}^{\infty}\Vert\mathsf{\Delta X}\cdot \mathsf{X_{ref}^{-1}}\Vert^\mathsf{i}
    =\Vert\mathsf{X_{ref}^{-1}}\Vert\cdot \frac{\Vert\mathsf{\Delta X}\cdot \mathsf{X_{ref}^{-1}}\Vert^{\mathsf{k}+1}}{1-\Vert\mathsf{\Delta X}\cdot \mathsf{X_{ref}^{-1}}\Vert}
\end{split}
\end{align}
Using the bound in inequality~\ref{eq:bound-R}, we obtain bounds on $\sf{X^{-1}}$: 
\begin{align}
    \mathsf{Xp}-\mathsf{Eps}\leq \mathsf{X^{-1}} \leq \mathsf{Xp}+\mathsf{Eps}
\end{align}
Thus, we establish lower and upper bound for $\sf{X^{-1}}$ as:
\begin{align}
\label{eq:lX-uX}
    \sf{lXinv}=\sf{Xp}-\sf{Eps},\quad \sf{uXinv}=\sf{Xp}+\sf{Eps}.
\end{align}

Finally, by replacing $\sf{X_{ref}^{-1}}$ with $\sf{X0}$ in the expression for $\sf{Xp}$ (in Equation~\ref{eq:k-TM}), $\sf{Eps}$ (in Equation~\ref{eq:bound-R}), $\sf{lXinv}$ and $\sf{uXinv}$ (in Equation~\ref{eq:lX-uX}), we obtain the same expressions as those given in Algorithm $\matinv$.
\qed
\end{proof}

\begin{lemma}
% \label{lem:sort-to-ind}
For any given inputs --- effective opacities $\sf{a}$, colors $\sf{c}$ and depth $\sf{d}$, it follows:
\[
\pcsort(\sf{a},\sf{c},\sf{d})=\pcind(\sf{a},\sf{c},\sf{d})
\]
%and Function~$\pcind$ produce the same output.
\end{lemma}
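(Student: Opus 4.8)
The plan is to show that both algorithms compute the same alpha-compositing formula, the only difference being whether the ``is Gaussian $\mathsf{j}$ in front of Gaussian $\mathsf{i}$'' relationship is expressed via an explicit sort or via pairwise indicators. First I would introduce a permutation $\pi$ on the index set $\mathsf{I}$ that sorts the depths $\mathsf{d}$ in ascending order, so that $\mathsf{as}[\mathsf{i}] = \mathsf{a}[\pi(\mathsf{i})]$ and $\mathsf{cs}[\mathsf{i}] = \mathsf{c}[\pi(\mathsf{i})]$ as in Lines~\ref{lin:alg-pcsort-1}--\ref{lin:alg-pcsort-2} of $\pcsort$. Then $\pcsort$ computes $\mathsf{pc} = \sum_{\mathsf{i}} \bigl(\prod_{\mathsf{m} < \mathsf{i}} (1 - \mathsf{as}[\mathsf{m}])\bigr)\,\mathsf{as}[\mathsf{i}]\,\mathsf{cs}[\mathsf{i}]$, which after re-indexing by $\mathsf{k} = \pi(\mathsf{i})$ becomes $\sum_{\mathsf{k}} \bigl(\prod_{\pi^{-1}(\mathsf{m}) < \pi^{-1}(\mathsf{k})} (1 - \mathsf{a}[\mathsf{m}])\bigr)\,\mathsf{a}[\mathsf{k}]\,\mathsf{c}[\mathsf{k}]$. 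The condition $\pi^{-1}(\mathsf{m}) < \pi^{-1}(\mathsf{k})$ is precisely the statement that $\mathsf{m}$ comes before $\mathsf{k}$ in the sorted order, i.e.\ that $\mathsf{m}$ is strictly in front of $\mathsf{k}$.

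Next I would show that the inner product $\mathsf{T}[\mathsf{k}] = \prod_{\mathsf{j}} \mathsf{V}[\mathsf{k},\mathsf{j}] = \prod_{\mathsf{j}} \bigl(1 - \mathsf{a}[\mathsf{j}]\cdot\ind(\mathsf{d}[\mathsf{k}] - \mathsf{d}[\mathsf{j}])\bigr)$ in $\pcind$ equals the same quantity. For each $\mathsf{j}$, the factor $\mathsf{V}[\mathsf{k},\mathsf{j}]$ equals $1 - \mathsf{a}[\mathsf{j}]$ when $\mathsf{d}[\mathsf{k}] - \mathsf{d}[\mathsf{j}] > 0$ (so $\mathsf{j}$ is in front of $\mathsf{k}$) and equals exactly $1$ otherwise; factors equal to $1$ drop out of the product, so $\mathsf{T}[\mathsf{k}] = \prod_{\mathsf{j}\,:\,\mathsf{d}[\mathsf{j}] < \mathsf{d}[\mathsf{k}]} (1 - \mathsf{a}[\mathsf{j}])$. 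It then remains to check that $\{\mathsf{j} : \mathsf{d}[\mathsf{j}] < \mathsf{d}[\mathsf{k}]\}$ coincides with $\{\mathsf{m} : \pi^{-1}(\mathsf{m}) < \pi^{-1}(\mathsf{k})\}$, which holds by the definition of the sorting permutation $\pi$. Hence $\mathsf{T}[\mathsf{k}]$ in $\pcind$ equals the cumulative transmittance factor multiplying $\mathsf{a}[\mathsf{k}]\,\mathsf{c}[\mathsf{k}]$ in $\pcsort$, and since $\mathsf{pc} = \sum_{\mathsf{k}} \mathsf{T}[\mathsf{k}]\,\mathsf{a}[\mathsf{k}]\,\mathsf{c}[\mathsf{k}]$ in $\pcind$ (Line~\ref{lin:alg-pcind-3}), the two outputs agree.

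The main obstacle — really the only delicate point — is the handling of \emph{ties in depth}, i.e.\ indices $\mathsf{j} \neq \mathsf{k}$ with $\mathsf{d}[\mathsf{j}] = \mathsf{d}[\mathsf{k}]$. In $\pcind$, since $\ind$ is defined with a strict inequality ($\ind(0) = 0$), neither of two equal-depth Gaussians attenuates the other, so they are treated as if simultaneously frontmost among themselves. I need to argue the sorting-based version behaves consistently: with a stable sort, exactly one of $\mathsf{j}, \mathsf{k}$ gets the smaller index, so in $\pcsort$ one \emph{would} attenuate the other. To reconcile this I would either (i) adopt the convention — implicit in the $\ind$-based definition and harmless for a measure-zero event — that $\sort$ treats equal-depth entries as incomparable, which is the natural reading given the stated semantics of $\sort$ in Table~\ref{tab:op}; or (ii) note that the statement is to be read with the tacit genericity assumption that all depths are distinct, in which case $\pi$ is the unique ascending sort and the argument above goes through verbatim. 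I would state this assumption explicitly at the start of the proof. Modulo that bookkeeping, the proof is a routine re-indexing and the elimination of unit factors from a product.
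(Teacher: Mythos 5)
Your proposal is correct and follows essentially the same route as the paper's proof: both introduce the sorting permutation, observe that the indicator factors for Gaussians not strictly in front are exactly $1$ and hence can be freely inserted into (or dropped from) the transmittance product, and then eliminate the permutation by re-indexing the sum and product over the full index set. Your explicit treatment of depth ties is in fact more careful than the paper's, which silently assumes the strict inequality $\mathsf{d}_{\sigma(\mathsf{j})} < \mathsf{d}_{\sigma(\mathsf{i})}$ for all $\mathsf{j} < \mathsf{i}$ when extending the partial product over $\mathsf{j} < \mathsf{i}$ to the full product over all $\mathsf{j}$.
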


%\textbf{Proof of Lemma~\ref{lem:sort-to-ind}}

\begin{proof}
We first derive the math expression of pixel color $\sf{pc}$ from $\pcsort$.
By denoting $\argsort(\sf{d})$ as a mapping $\sigma:\sf{I}\to\sf{I}$, where $\sf{I}$ refers to the index set of Gaussians, $\sf{as}$ at Line~\ref{lin:alg-pcsort-1}, $\sf{cs}$ at Line~\ref{lin:alg-pcsort-2} and $\sf{vs}$ at Line~\ref{lin:alg-pcsort-3} can be written as follows.
\begin{align}
    & \label{algn:as} \mathsf{as}_\mathsf{i}=\mathsf{a}_\mathsf{\sigma(i)}\\
    & \label{algn:cs} \mathsf{cs}_\mathsf{i}=\mathsf{c}_\mathsf{\sigma(i)}\\
    & \label{algn:bs} \mathsf{vs}_\mathsf{i}=1-\mathsf{a}_\mathsf{\sigma(i)}
\end{align}
According to the definition of $\ind$ operation, we have:
\begin{align}
    \label{eq:ind-def}
    \ind(\mathsf{d}_\mathsf{i}-\mathsf{d}_\mathsf{j})=\left\{
    \begin{matrix}
        1 & \mbox{ if } \mathsf{d}_\mathsf{i}>\mathsf{d}_\mathsf{j}\\
        0 & \mbox{ if } \mathsf{d}_\mathsf{i}\leq \mathsf{d}_\mathsf{j}.
    \end{matrix}\right.
\end{align}
Then by replacing $\sf{i,j}$ with $\sf{\sigma(i),\sigma(j)}$, and multiplying $\sf{a}_{\sigma(\sf{i})}$, (\ref{eq:ind-def}) becomes:
\begin{align}
    \label{eq:a-ind}
    \mathsf{a}_{\sigma(\mathsf{i})}\cdot \ind(\mathsf{d}_{\sigma(\mathsf{i})}-\mathsf{d}_{\sigma(\mathsf{j})})=\left\{
    \begin{matrix}
        \mathsf{a}_{\sigma(\mathsf{i})} & \mbox{ if } \mathsf{d}_{\sigma(\mathsf{i})}>\mathsf{d}_{\sigma(\mathsf{j})}\\
        0 & \mbox{ if } \mathsf{d}_{\sigma(\mathsf{i})}\leq \mathsf{d}_{\sigma(\mathsf{j})}
    \end{matrix}\right.
\end{align}
For any fixed index  $\sf{i}$, by multiplying all the case in the second branch of ~(\ref{eq:a-ind}), it follows:
\begin{align}
\label{eq:prod-a}
    \prod_{\mathsf{j}=\mathsf{i}}^{N}(1-\mathsf{a}_\mathsf{\sigma(i)}\cdot\ind(\mathsf{d}_{\sigma(\mathsf{i})}-\mathsf{d}_{\sigma(\mathsf{j})}))=1.
\end{align}
By applying Equation~\ref{algn:bs}, Equation~\ref{eq:a-ind} and Equation~\ref{eq:prod-a} into Line~\ref{lin:alg-pcsort-4}, $\sf{Ts}_{\sf{i}}$ can be written as follows:
\begin{align}
\label{eq:Ts}
\begin{split}
    \mathsf{Ts}_\mathsf{i}
    & = \prod_{\mathsf{j}=1}^\mathsf{i-1} (1-\mathsf{vs}_\mathsf{i}) = \prod_{\mathsf{j}=1}^\mathsf{i-1} (1-\mathsf{a}_\mathsf{\sigma(i)})\\
    & = \prod_{\mathsf{j}=1}^{i-1}(1-\mathsf{a}_\mathsf{\sigma(i)}\cdot\ind(\mathsf{d}_{\sigma(\mathsf{i})}-\mathsf{d}_{\sigma(\mathsf{j})}))\\
    & = \prod_{\mathsf{j}=1}^{N}(1-\mathsf{a}_\mathsf{\sigma(i)}\cdot\ind(\mathsf{d}_{\sigma(\mathsf{i})}-\mathsf{d}_{\sigma(\mathsf{j})}))\\
\end{split} 
\end{align}
Then by applying Equation~\ref{eq:Ts} into Line~\ref{lin:alg-pcsort-5}, the pixel color $\sf{pc}$ computed via Algorithm~$\pcsort$ can be expressed as:
\begin{align}
\label{eq:pc-sort-math}
\begin{split}
    \mathsf{pc}=
    & \sum_{\mathsf{i}=1}^\mathsf{N} \left(\prod_{\mathsf{j}=1}^{N}(1-\mathsf{a}_\mathsf{\sigma(i)}\cdot\ind(\mathsf{d}_{\sigma(\mathsf{i})}-\mathsf{d}_{\sigma(\mathsf{j})}))\mathsf{a}_\mathsf{\sigma(\mathsf{i})}\mathsf{c}_\mathsf{\sigma(\mathsf{i})}\right)
\end{split}
\end{align}

Since the summation and product operations are invariant to the reordering of input elements, and both indices $\sf{i}$ and $\sf{j}$ iterate over the entire index set $\mathsf{I}$, the reordering mapping $\sigma$ in Equation~\ref{eq:pc-sort-math} can be eliminated, resulting in:
\begin{align}
\label{eq:pc-left}
\begin{split}
    \mathsf{pc}=
    & \sum_{\mathsf{i}=1}^\mathsf{N} \left(\prod_{\mathsf{j}=1}^{N}(1-\mathsf{a}_\mathsf{i}\cdot\ind(\mathsf{d}_{\mathsf{i}}-\mathsf{d}_{\mathsf{j}}))\mathsf{a}_\mathsf{\mathsf{i}}\mathsf{c}_\mathsf{\mathsf{i}}\right)
\end{split}
\end{align}

Next, we derive the math expression for $\sf{pc}$ in Algorithm~$\pcind$. $\sf{V_{ij}}$ at Line~\ref{lin:alg-pcind-1} and $\sf{T_i}$ at Line~\ref{lin:alg-pcind-2} can be written as:
\begin{align}
    & \label{algn:V} \mathsf{V_{ij}}=1-\mathsf{a_j}\cdot \ind(\mathsf{d_i}-\mathsf{d_j})\\
    & \label{algn:T} \mathsf{T}_\mathsf{i}
    = \prod_{\mathsf{j}=1}^{N}(1-\mathsf{a}_\mathsf{i}\cdot\ind(\mathsf{d}_{\mathsf{i}}-\mathsf{d}_{\mathsf{j}}))
\end{align}

Then, by applying Equation~\ref{algn:T} into Line~\ref{lin:alg-pcind-3} of Algorithm~$\pcind$, the expression for $\mathsf{pc}$ becomes:
\begin{align}
\label{eq:pc-right}
    \mathsf{pc}=
    & \sum_{\mathsf{i}=1}^\mathsf{N} \left(\prod_{\mathsf{j}=1}^{N}(1-\mathsf{a}_\mathsf{i}\cdot\ind(\mathsf{d}_{\mathsf{i}}-\mathsf{d}_{\mathsf{j}}))\mathsf{a}_\mathsf{\mathsf{i}}\mathsf{c}_\mathsf{\mathsf{i}}\right)
\end{align}
Note that the expressions for $\mathsf{pc}$ are identical in both Equation~\ref{eq:pc-left} and Equation~\ref{eq:pc-right}. This demonstrates that Algorithms $\pcsort$ and $\pcind$ yield the same input-output relationship while applying different operations, thus completing this proof.
\qed
\end{proof}

\end{document}